\newtheorem{prop}{Proposition}
\newcommand{\R}{\mathbb{R}}
\newcommand{\sff}{\mathcal}
\newcommand{\norm}[1]{\left\lVert#1\right\rVert}
\newcommand{\ts}{\textsuperscript}  
\newtcolorbox{mybox}{colback  = gray!10!white,sharp corners}  
\title{ManiFeSt: Manifold-based Feature Selection for Small Data Sets}
\author{%
  David Cohen \\
  Technion - Israel Institute of Technology\\
  \texttt{davidcohenys@gmail.com} 
   \And
   Tal Shnitzer\\
   MIT\\
   \texttt{talsd@mit.edu} \\
  \AND
  Yuval Kluger\\
  Yale University\\
  \texttt{yuval.kluger@yale.edu}\\
  \And
  Ronen Talmon \\
  Technion - Israel Institute of Technology\\
  \texttt{ronen@ee.technion.ac.il} \\
}
\begin{document}

\maketitle

\begin{abstract}
In this paper, we present a new method for few-sample supervised feature selection (FS). 
Our method first learns the manifold of the feature space of each class using kernels capturing multi-feature associations.
Then, based on Riemannian geometry, a composite kernel is computed, extracting the differences between the learned feature associations. 
Finally, a FS score based on spectral analysis is proposed.
Considering multi-feature associations makes our method multivariate by design. 
This in turn allows for the extraction of the hidden manifold underlying the features and avoids overfitting, facilitating few-sample FS.
We showcase the efficacy of our method on illustrative examples and several benchmarks, where our method demonstrates higher accuracy in selecting the informative features compared to competing methods. 
In addition, we show that our FS leads to improved classification and better generalization when applied to test data.
\end{abstract}

\section{Introduction}

Feature selection (FS) plays a vital role in facilitating effective and efficient learning in problems involving high-dimensional data \cite{bolon2022feature,friedman2001elements,duda2006pattern}.
By selecting the relevant features, FS methods, in effect, reduce the dimension of the data, which has been shown useful in improving learning, especially in terms of generalization and noise reduction \cite{remeseiro2019review}.
In contrast to feature extraction (FE), in which the whole feature space is projected into a lower dimension space,
FS eliminates irrelevant and redundant features and preserves interpretability \cite{hira2015review,alelyani2018feature}.

In the literature, there are three main approaches for FS: (i) wrapper, (ii) embedded, and (iii) filter \cite{tang2014feature,alelyani2018feature}.
In the \emph{wrapper approach}, the classifier performance plays an integral role in the feature selection. Concretely, the performance of a specific classifier is maximized by searching for the best subset of features. Since examining all possible subsets is an NP-hard problem, a suboptimal search is often applied.
Still, this approach is considered computationally heavy for problems with large feature spaces \cite{hira2015review,alelyani2018feature}.

The \emph{embedded approach} mitigates the wrapper limitations by incorporating the feature selection in the model training, and thus, avoids multiple optimization processes. Consequently, embedded methods are considered computationally feasible and often preserve the advantages of the wrapper approach. However, both wrapper and embedded methods are prone to overfitting because the selection is part of the training \cite{brown2012conditional,bolon2013review,venkatesh2019review}.

In the \emph{filter approach}, each feature is ranked according to particular criteria independent of the model learning. The various ranking techniques aim to identify the features that best discriminate between the different classes. Then, highly-ranked features are selected and utilized in down stream learning tasks. This approach is computationally efficient and scalable for high-dimensional data. In addition, the classifier performance is not controlled during the FS, mitigating overfitting and enhancing generalization capabilities \cite{jain2018feature}.

In this work, we consider a binary classification problem in a supervised setting, in which the class labels are used in the FS process. 
We propose a FS filter method that identifies the meaningful features by comparing the underlying geometry of the feature spaces of the two classes. 
First, the \emph{feature manifold} of each class is learned using a symmetric positive-definite (SPD) kernel. Then, based on the Riemannian geometry of SPD matrices \cite{pennec2006riemannian,bhatia2009positive}, we build a composite symmetric kernel that captures the differences between the geometries underlying the feature spaces \cite{shnitzer2022spatiotemporal}.
Specifically, we apply spectral analysis to the composite kernel and propose a score that reveals discriminative features. 
We note that our formulation is not limited to the SPD case, and an extension to symmetric positive semi-definite (SPSD) kernels is presented in the appendix.

The proposed method, which we term \emph{ManiFeSt} (Manifold-based Feature Selection), is motivated theoretically and tested on several benchmark datasets. We show empirically that ManiFeSt demonstrates improved capabilities in identifying the informative features compared to other FS filter methods.
The combination of using kernels to capture multi-feature associations and spectral analysis makes ManiFeSt multivariate by design.
We posit that such multivariate information enhances the ability to identify the optimal subset of features, leading to improved performance and generalization capabilities, facilitating FS with only a few labeled samples.
Indeed, we empirically show that ManiFeSt is superior compared to competing univariate methods when the sample size is small.      

Our main contributions are as follows.
(i) We present a new approach for FS from a multivariate standpoint, exploiting the geometry underlying the features. We show that considering multi-feature associations, rather than a univariate perspective based on single features, is useful for identifying the features with high discriminative capabilities.
(ii) We employ a new methodology for feature manifold learning that combines classical manifold learning with the Riemannian geometry of SPD matrices.
(iii) We propose a new algorithm for supervised FS. Our algorithm demonstrates high performance, specifically, improved generalization capabilities, promoting accurate few-sample FS.


\section{Related Work}

Classical filter methods use statistical tests to rank the features. One of the most straightforward methods is based on computing the Pearson's correlation of each feature with the class label \cite{battiti1994using}. The ANOVA F-value \cite{kao2008analysis}, the t-test \cite{davis1986statistics}, and the Fisher score \cite{duda2006pattern} are similarly used for selecting discriminative features. Other scoring techniques, such as information gain (IG) \cite{vergara2014review,ross2014mutual} and Gini-index \cite{shang2007novel}, select features that maximize the purity of each class.

In addition to statistical methods, a fast-growing class of filter methods rely on geometric considerations.
One popular method is the Laplacian score \cite{he2005laplacian,zhao2007spectral,lindenbaum2020differentiable}, which attempts to evaluate the importance of each feature using a graph-Laplacian that is constructed from the samples.
Similarly to the Laplacian score, most of the geometric FS methods consider the geometry underlying the samples. One exception is Relief \cite{kira1992practical} (including its popular extensions \cite{kononenko1994estimating,robnik2003theoretical}), in which the score increases or decreases according to the differences between the values of the feature and its nearest neighbors. 
In contrast to Relief-based methods, 
our method captures the multi-feature associations using kernels, and therefore, it is not limited to nearest neighbors local geometry.

Most existing FS filter methods are univariate, i.e., they consider each feature separately and do not account for multi-feature associations \cite{bolon2015feature,shah2016review,jain2018feature}. Thus, the ability to identify the optimal feature subset may be limited \cite{li2017feature}, leading to degraded performance. 
To mitigate this limitation, mRMR (Minimum Redundancy and Maximum Relevance) \cite{ding2005minimum,zhao2019maximum} and CFS (Correlation-based Feature Selection) \cite{hall1999correlation} algorithms assume that highly correlated features do not contribute to the model and attempt to control feature redundancy. The key idea is to balance between two measures: a relevance measure and a redundancy measure.
While mRMR and CFS algorithms may consider feature associations to avoid selecting highly correlated features, thereby controlling the redundancy in a multivariate manner, the relevance measure is univariate. 
Two notable exceptions are methods that use the trace ratio \cite{nie2008trace} and the generalized fisher score \cite{gu2012generalized} as relevance metrics, which are computed based on a subset of features. However, both methods involve optimization that requires more resources than standard FS filter methods. 

Traditional geometric FS methods such as the Laplacian score \cite{he2005laplacian}, SPEC \cite{zhao2007spectral}, and Relief \cite{robnik2003theoretical} evaluate the importance of the features based on the sample space. In the Laplacian score and SPEC, the constructed kernel reflects the sample associations, and in Relief, the nearest neighbors are determined based on the samples geometry. 
In contrast, our method is applied to the feature space rather than the sample space. Consequently, our method is multivariate, designed to capture complex structures underlying the feature space, leading to improved generalization and consistency. 

\section{ManiFeSt - Manifold-based Feature Selection}
\label{sec:ManiFeSt}

The proposed algorithm for feature selection consists of three stages. First, a feature space representation is constructed for each class using a kernel. Then, we build a composite kernel that is specifically-designed to capture the difference between the classes. Finally, to reveal the significant features, we apply spectral analysis to the composite kernel and propose a FS score.

\subsection{Feature Manifold Learning}
\label{subsec:feature_manifold_learning}

Consider a dataset $\boldsymbol{X}\in \R^{N{\times}d}$ with $N$ samples and $d$ features consisting of two classes.
In order to capture differences in the feature associations between the two classes, we propose to learn the underlying geometry of the feature space of each class using a kernel. For this purpose, according to the class labels, the dataset is divided into two subsets $\boldsymbol{X}^{(1)} = [\boldsymbol{x}_1^{(1)},\cdots,\boldsymbol{x}_d^{(1)}]  \in \R^{N_1{\times}d}$ and $\boldsymbol{X}^{(2)} = [\boldsymbol{x}_1^{(2)},\cdots,\boldsymbol{x}_d^{(2)}] \in \R^{N_2{\times}d}$, where $\boldsymbol{x}_i^{(\ell)} \in \mathbb{R}^{N_{\ell}}$ denotes the $i$th \emph{feature} in the $\ell$th class, $N_1$ and $N_2$ denote the number of samples in the first and second class, respectively, and $N=N_1+N_2$.

For each class $\ell=1,2$, a radial basis function (RBF) kernel $\boldsymbol{K}_{\ell} \in \R^{d \times d}$ is constructed as follows:
\begin{equation}
	\label{eq:kernels}
	\boldsymbol{K}_{\ell}[i,j] = \exp\left(-\norm{\boldsymbol{x}_i^{(\ell)}-\boldsymbol{x}_j^{(\ell)}}^2/2\sigma_\ell^2\right), \ i,j=1,\ldots,d
\end{equation}
where $\sigma_{\ell}$ is a scale factor, typically set to the median of the Euclidean distances up to some scalar.

Using kernels is common practice in nonlinear dimension reduction and manifold learning methods \cite{scholkopf1997kernel,tenenbaum2000global,roweis2000nonlinear,belkin2003laplacian,coifman2006diffusion}. From the standpoint of this approach, the features are viewed as nodes of an undirected weighted graph and the kernel prescribes the weights of the edges connecting the nodes (features). This graph is considered a discrete approximation of the continuous manifold, on which the features reside. Importantly, in contrast to classical manifold learning methods \cite{tenenbaum2000global,roweis2000nonlinear,belkin2003laplacian,coifman2006diffusion}, which typically attempt to learn the manifold underlying the samples, our method learns the manifold underlying the features, capturing information on the feature associations, and thus, making our approach multivariate.
This viewpoint is tightly related to graph signal processing \cite{shuman2013emerging,sandryhaila2013discrete}, where graphs, whose nodes are the features of the signals, are similarly computed. 

One important property of RBF kernels is that they are symmetric positive semi-definite (SPSD) matrices, a fact that we will exploit next. To simplify the exposition, we will assume here that they are strictly positive (SPD), and address the general case of SPSD matrices in Appendix \ref{app:spsd}.
We note that our method is not limited to RBF kernels, and other SPSD kernels could be used instead.

\subsection{Operator Composition on the SPD Manifold}

One way to extract the differences between the feature spaces through their kernel representation is to simply subtract the kernels $\boldsymbol{K}_1-\boldsymbol{K}_2$. Although natural, applying such a linear operation violates the SPD geometry of the kernels and in fact assumes that the kernels live in a linear (vector) space. 
In order to ``respect'' and exploit the underlying Riemannian SPD geometry, we propose to implement the following two-step procedure in a Riemannian manner.
We first find the midpoint $\boldsymbol{M}= (\boldsymbol{K}_1+\boldsymbol{K}_2$)/2, and then, we compute the differences $\boldsymbol{M} -\boldsymbol{K}_1$ or $\boldsymbol{M} -\boldsymbol{K}_2$. 
The Riemannian counterparts of the above Euclidean additions and subtractions are described next.
See Appendix \ref{app:background} for background on the Riemannian geometry of SPD and SPSD matrices. 

Firstly, we compute the mid-point $\boldsymbol{M}$ on the geodesic path connecting $\boldsymbol{K}_1$ and $\boldsymbol{K}_2$ (which coincides with the Riemannian mean \cite{pennec2006riemannian}):
\begin{equation}
	\label{eq:mean}
	\boldsymbol{M} = 	\gamma_{\boldsymbol{K}_1\rightarrow\boldsymbol{K}_2}(1/2) = \boldsymbol{K}_1^{1/2}\left(\boldsymbol{K}_1^{-1/2}\boldsymbol{K}_2\boldsymbol{K}_1^{-1/2}\right)^{1/2}\boldsymbol{K}_1^{1/2}.
\end{equation}
Second, to compute the differences, the two kernels are projected onto the tangent space to the SPD manifold at the mid-point $\boldsymbol{M}$.
By definition, this is given by the logarithmic map of $\boldsymbol{K}_1$ at $\boldsymbol{M}$:
\begin{equation}
	\label{eq:difference}
	\boldsymbol{D} = \text{Log}_{\boldsymbol{M}}(\boldsymbol{K}_1) =
	\boldsymbol{M}^{1/2}\log\left(\boldsymbol{M}^{-1/2}\boldsymbol{K}_1\boldsymbol{M}^{-1/2}\right)\boldsymbol{M}^{1/2}.
\end{equation}
Note that since $\text{Log}_{\boldsymbol{M}}(\boldsymbol{K}_2) = -\text{Log}_{\boldsymbol{M}}(\boldsymbol{K}_1),$
considering only one projection is sufficient.
In Fig.~\ref{fig:operators}, we present an illustration of the construction of the two kernels $\boldsymbol{M}$ and $\boldsymbol{D}$.

\begin{figure}[t]
	\begin{center}
		\includegraphics[width=0.6\textwidth]{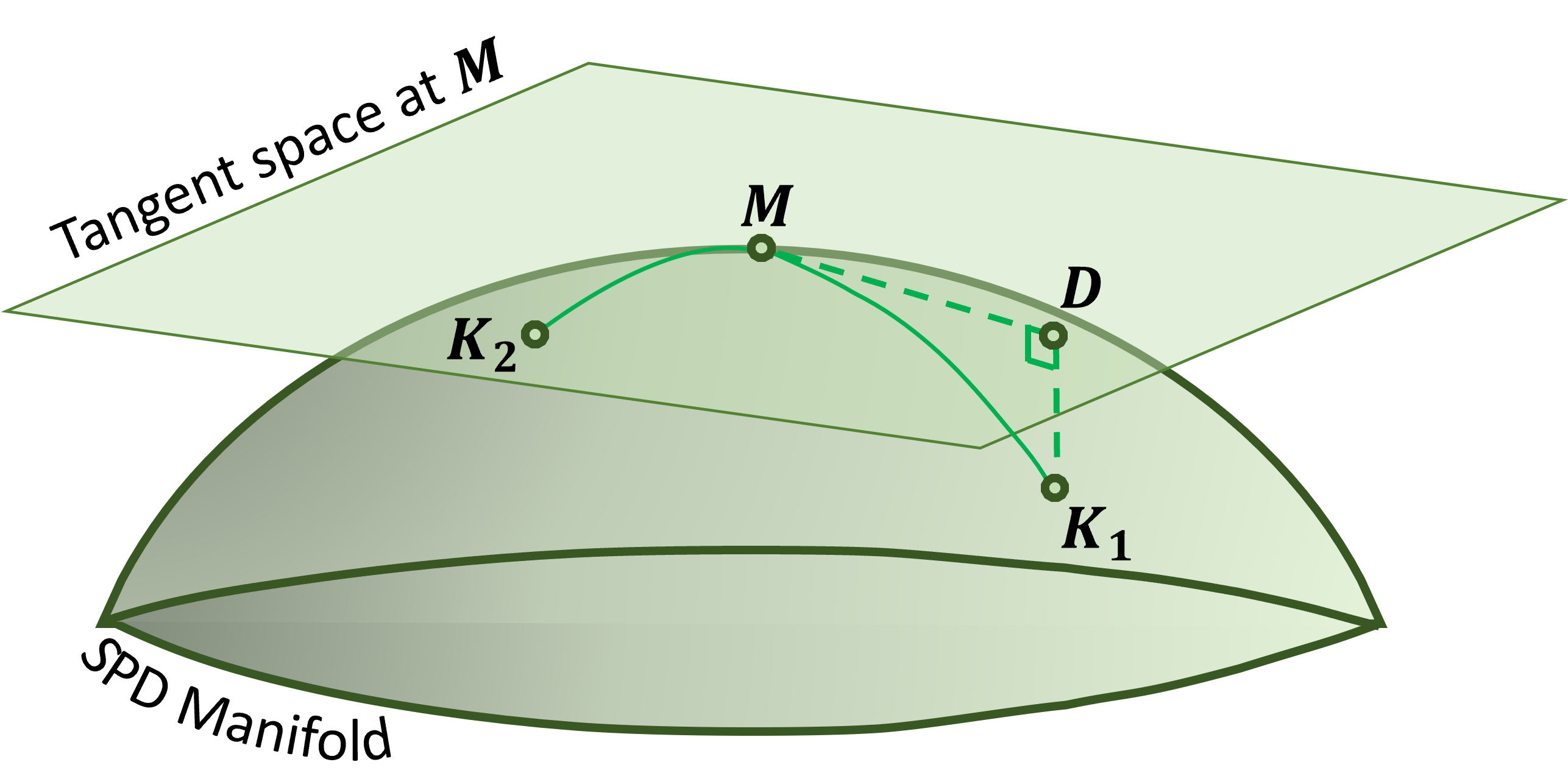}
		\caption{Illustration of the definitions of the operators $\boldsymbol{M}$ and $\boldsymbol{D}$.}
		\label{fig:operators}
	\end{center}
\end{figure}

Background and details on the implementation using SPSD geometry appear in Appendix \ref{app:background} and \ref{app:spsd}, respectively. 

\subsection{Proposed Feature Score}

The proposed FS score relies on the spectral analysis of the composite kernel $\boldsymbol{D}$. 
Let $\lambda_i^{(\boldsymbol{D})}$ and $\boldsymbol{\phi}^{(\boldsymbol{D})}_i \in \R^{d}$ be the eigenvalues and eigenvectors of $\boldsymbol{D}$, respectively.
Note that $\boldsymbol{D}$ is symmetric, and therefore, its eigenvalues are real and its eigenvectors form an orthonormal basis.

The proposed FS score $\boldsymbol{r} \in \mathbb{R}^d$ is given by
\begin{equation}
			\label{eq:score}
			\boldsymbol{r} = \sum_{i=1}^d |\lambda^{(\boldsymbol{D})}_i| \cdot (\boldsymbol{\phi}^{(\boldsymbol{D})}_i \odot \boldsymbol{\phi}^{(\boldsymbol{D})}_i)
\end{equation}
where $\odot$ is the Hadamard (element-wise) product and $\boldsymbol{r}(j)$ is the score of feature $j$.
In words, the magnitude of the eigenvectors is weighted by the eigenvalues and summed over 
to form the ManiFeSt score.
Note that this score is multivariate from two perspectives. First, the kernels capture the associations of each feature with all other features. Second, the eigenvectors $\boldsymbol{\phi}^{(\boldsymbol{D})}_i \in \R^{d}$ integrate kernel entries, thereby incorporating higher-order associations between the features. 

ManiFeSt is summarized in Algorithm \ref{alg:ManiFeSt}.
\begin{algorithm}
	\hspace*{\algorithmicindent} \textbf{Input:}   Two class datasets $\boldsymbol{X}^{(1)}$ and $\boldsymbol{X}^{(2)}$ \\
	\hspace*{\algorithmicindent} \textbf{Output:}   FS score $\boldsymbol{r}$
	
	\caption{ManiFeSt Score}\label{alg:ManiFeSt}
	\begin{algorithmic}[1]
		\State Construct kernels $\boldsymbol{K}_{1}$ and $\boldsymbol{K}_{2}$ for the two datasets    \Comment{According to (\ref{eq:kernels}) }
		\State Build the mean operator $\boldsymbol{M}$ \Comment{According to (\ref{eq:mean}) }
		\State Build the difference operator $\boldsymbol{D}$ \Comment{According to (\ref{eq:difference}) }
        \State Apply eigenvalue decomposition to $\boldsymbol{D}$ and compute the FS score $\boldsymbol{r}$ \Comment{According to \eqref{eq:score}}
        

	\end{algorithmic}
\end{algorithm}

\subsection{Illustrative Example}

We use the MNIST dataset \cite{deng2012mnist} for illustration.
We generate two sets consisting of 1500 images of 4 and 1500 images of 9. In this example, the pixels are viewed as features, and we aim to identify pixels that bear discriminative information on 4 and 9. 
The ManiFeSt results are presented in Fig.~\ref{fig:Ilustration}.


We see in Fig.~\ref{fig:Ilustration}(left) that the two leading eigenvectors of the mid-point kernel, $\boldsymbol{M}$, correspond to the common background and to the common structure of both digits, 4 and 9.
In Fig.~\ref{fig:Ilustration}(middle), we see that the leading eigenvectors of the composite difference kernel, $\boldsymbol{D}$, indeed capture the main conceptual differences between the two digits.
These differences include the gap at the top of the digit 4, the tilt differences in the digits' legs, and the differences between the round upper part of 9 and the square upper part of 4.
As shown in Fig.~\ref{fig:Ilustration}(right), the ManiFeSt score, which weighs the eigenvectors by their respective eigenvalues, provides a consolidated measure of the discriminative pixels. 
In Appendix \ref{app:experiments}, we present an additional illustrative example. 



\begin{figure}[!h]
	\begin{center}
		\includegraphics[width=1\textwidth]{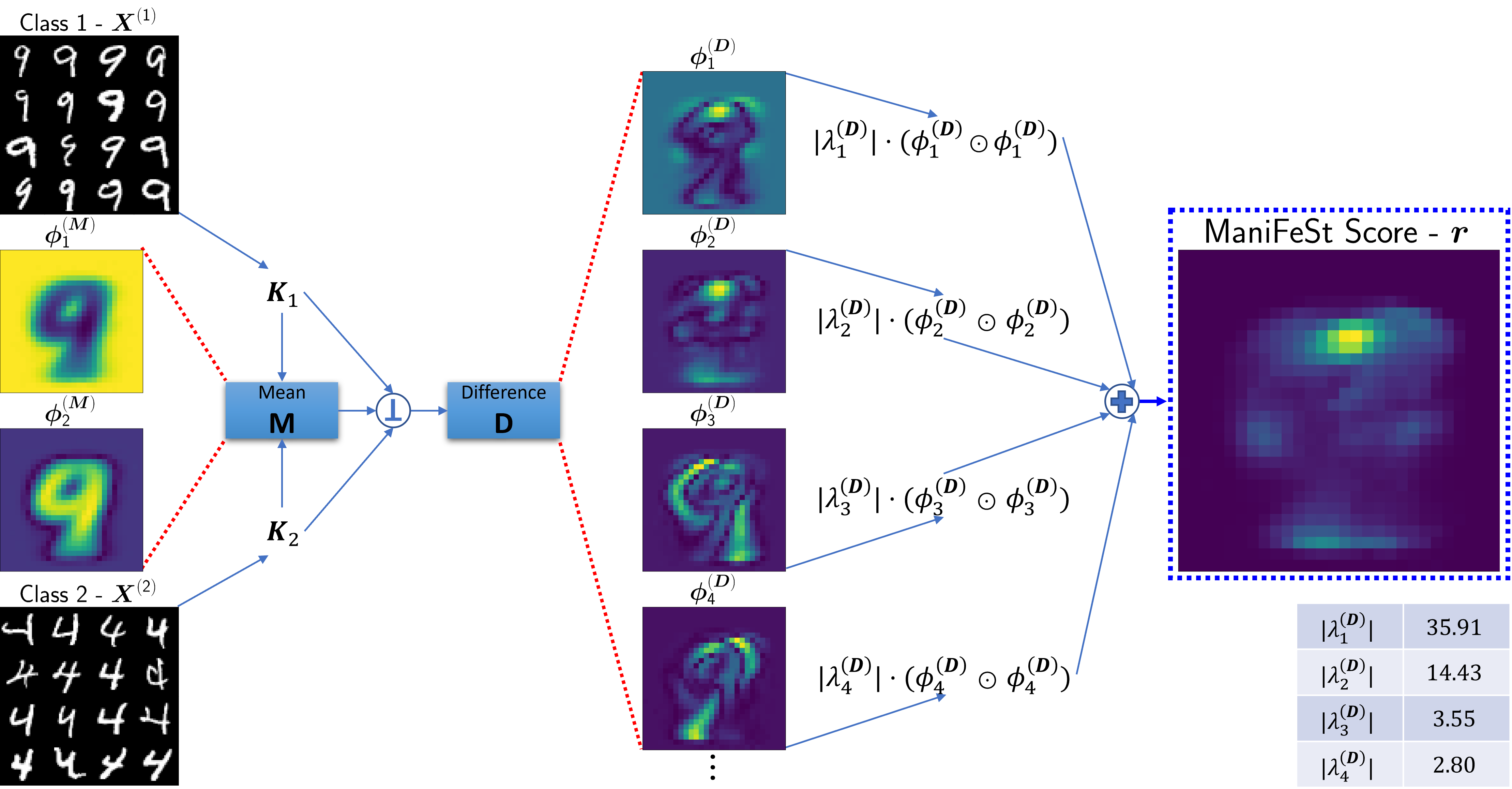}
		\caption{Illustration of the proposed approach and the resulting ManiFeSt score for digit recognition.}
		\label{fig:Ilustration}
	\end{center}
\end{figure}

\section{Theoretical Foundation}
\label{sec:justif}

Our score is related to several previous frameworks that extract new representations (signatures) of data using symmetric positive definite and semi-definite kernels. 
Two notable signatures, defined for shape analysis tasks, are the heat kernel signature \cite{sun2009concise} and the wave kernel signature \cite{aubry2011wave}, both are of the form $\sum_i f(\mu_i)\phi_i^2(x)$, where $\mu_i$ and $\phi_i$ are the eigenvalues and eigenvectors of the Laplace-Beltrami operator and $x$ is a point on the shape.
In another recent work \cite{cheng2020spectral}, such a score was shown to facilitate separation of clustered samples from background samples.
Inspired by these signatures, our score relies on the eigenpairs of the operator $\boldsymbol{D}$, defined in Eq. \eqref{eq:difference} as the Riemannian difference between the kernels representing the feature spaces of the two classes, $\boldsymbol{K_1}$ and $\boldsymbol{K_2}$. 

Each kernel, $\boldsymbol{K}_\ell$, captures intrinsic feature associations that characterize the samples in the class.
The eigenvectors of these kernels can be used as new representations for the feature spaces, extracting intra-class similarities between features. 
For each kernel, $\boldsymbol{K}_\ell$, the most dominant components of these feature associations are captured by eigenvectors that correspond to the largest eigenvalues. 
The motivation for our score then comes from the spectral properties of the difference operator, $\boldsymbol{D}$, which were recently analyzed in \cite{shnitzer2022spatiotemporal}.
This work proved that the leading eigenvectors of $\boldsymbol{D}$ (corresponding to the largest eigenvalues in absolute value) are related to similar eigenvectors of $\boldsymbol{K}_1$ and $\boldsymbol{K}_2$ that correspond to significantly different eigenvalues.
Specifically, it was shown that the eigenvalues of $\boldsymbol{D}$ that correspond to eigenvectors that are (approximately) shared by $\boldsymbol{K}_1$ and $\boldsymbol{K}_2$, are equal to $\lambda^{(\boldsymbol{D})} =\frac{1}{2} \sqrt{\lambda^{(\boldsymbol{K}_1)}\lambda^{(\boldsymbol{K}_2)}}\left(\log(\lambda^{(\boldsymbol{K}_1)})-\log(\lambda^{(\boldsymbol{K}_2)})\right)$.
The term $\sqrt{\lambda^{(\boldsymbol{K}_1)}\lambda^{(\boldsymbol{K}_2)}}$ implies that $\lambda^{(\boldsymbol{D})}$ is dominant only if both $\lambda^{(\boldsymbol{K_1})}$ and $\lambda^{(\boldsymbol{K_2})}$ are dominant. In addition, the term $\left(\log(\lambda^{(\boldsymbol{K}_1)})-\log(\lambda^{(\boldsymbol{K}_2)})\right)$ indicates that $\lambda^{(\boldsymbol{D})}$ is dominant only if $\lambda^{(\boldsymbol{K}_1)}\gg\lambda^{(\boldsymbol{K}_2)}$ or $\lambda^{(\boldsymbol{K}_2)}\gg\lambda^{(\boldsymbol{K}_1)}$.
Therefore, in the context of our work, the operator $\boldsymbol{D}$ emphasizes components representing feature associations that are (i) dominant, and (ii) significantly different in the two classes.
We utilize these properties for feature selection by defining the score $\boldsymbol{r}$ as the sum of the squared eigenvectors, weighted by their corresponding eigenvalues. See Appendix \ref{app:theoretical} for additional theoretical justification.

\section{Experiments}
\label{sec:experiments}

We demonstrate the performance of our method on both synthetic and real datasets and compare it to commonly-used FS methods. 
Throughout the experiments, the data is split to train and test sets with nested cross-validation.
All the competing FS methods are tuned to achieve the best results on the validation set. Additional results, implementation and parameter tuning details are in Appendix \ref{app:experiments}.

\subsection{XOR-100 Problem}

Following \cite{kim2010mlp,bolon2011behavior,yamada2020feature}, we generate a synthetic XOR dataset consisting of $d=100$ binary features and $N=50$ instances. Each feature is sampled from a Bernoulli distribution, and each instance is associated with a label given by $y=f_1 \oplus f_5$, where $\oplus$ is the XOR operation and $f_i$ is the $i$th feature. Thus, only two features, $f_1$ and $f_5$, are relevant for the classification of the label. 

This seemingly simple problem is in fact challenging, especially for existing univariate filter FS methods that consider each feature independently and ignore the inherent feature structure~\cite{bolon2015feature,li2017feature}. 

In Fig. \ref{fig:xor}, we present the normalized feature score obtained by the tested FS methods averaged over 50 Monte-Carlo iterations of data generation. The green circles denote the average score, while the red dashes indicate the standard deviation. In each iteration, the two features with maximal scores are selected, and the average number of correct selections for each method is denoted in parentheses. The results indicate that ManiFeSt perfectly identifies the multivariate behavior of $f_1$ and $f_5$, whereas all other compared methods, except for ReliefF, fail.

We note that this XOR-100 problem is a multivariate problem, because the XOR result depends on $f_1$ and $f_5$.
Nevertheless, we see that ReliefF, which is arguably a univariate method \cite{jovic2015review}, identifies the relevant features. Although ReliefF examines each feature separately, it considers neighboring samples, which evidently provide sufficient multivariate information to correctly detect the features in this example. Still, ManiFeSt, which is multivariate by design, outperforms ReliefF.

\begin{figure}
	\begin{center}
		\includegraphics[width=0.82\textwidth]{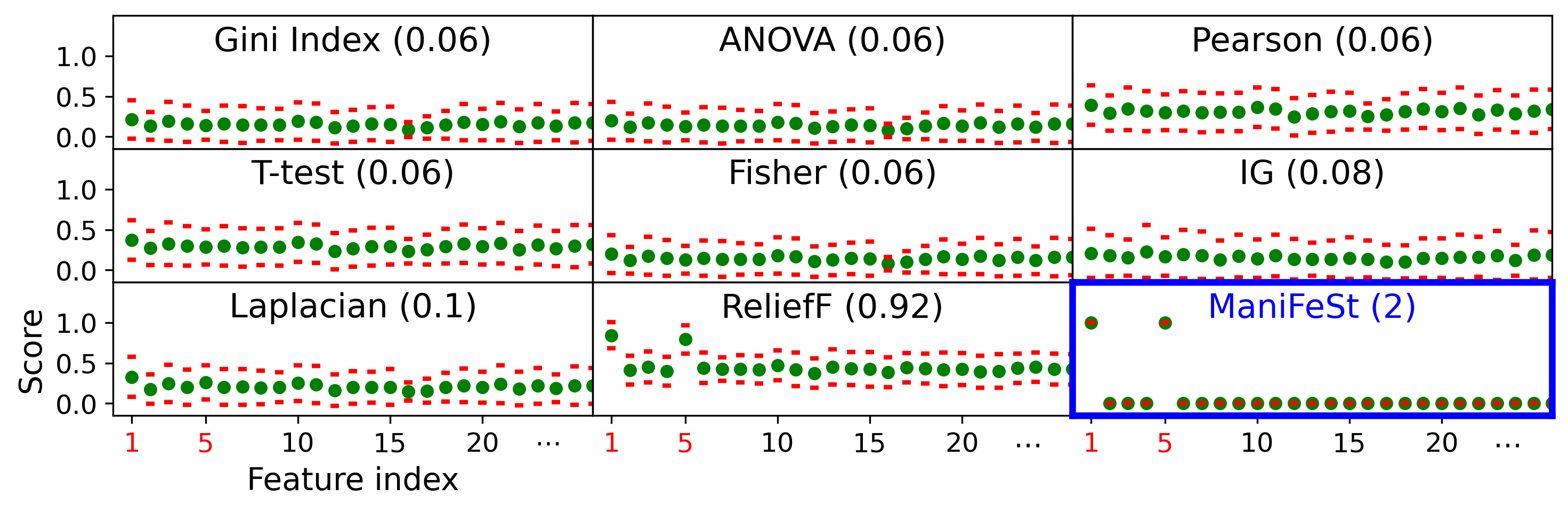}
		\caption{Feature ranking for the XOR-100 problem. Green circles denote the average score, and red lines indicate the standard deviation. The average number of correct FS is denoted in the parentheses.}
		\label{fig:xor}
	\end{center}
\end{figure}

\subsection{Madelon}

We test ManiFeSt on the Madelon synthetic dataset \cite{guyon2008feature} from the NIPS 2003 feature selection challenge. Based on a $5$-dimensional hypercube embedded in $\R^{5}$, the Madelon dataset consists of 2600 points grouped into 32 clusters. Each cluster is normally distributed and centered at one of the hypercube vertices. The clusters are randomly assigned to one of two classes.

Each point is a vector of 500 features, where only 20 are relevant: 5 correspond to the coordinates of the hypercube, and 15 are random linear combinations of them. The remaining features are noise.

The data is divided into train and test sets with a 10-fold cross-validation. 
We consider two cases. In the first case, the FS is based on the whole train set (2340 points). In the second case, the FS is based on only 5 percent of the train set (117 points). Since no ground-truth is available for the relevant features, for evaluation, an SVM classifier is optimized using the entire train set in both cases. 

Fig. \ref{fig:madelon} shows the classification accuracy obtained based on different subsets of features by the tested methods. The curves indicate the average test accuracy, and the shaded area represents the standard deviation. We see based on the red curves that all the methods identify relevant features when the assessment of the FS score is based on the entire train set. Note that selecting too few or too many features may lead to poor classification. The best average test accuracy is $90.73\%$ and is achieved by both ManiFeSt and ReliefF (which obtained the best result in the NIPS 2003 challenge \cite{guyon2007competitive}). 

In addition, when the FS is based on a reduced number of samples, all the methods but ManiFeSt fail to capture the relevant features, as demonstrated by the blue curves. In contrast, the classification obtained by ManiFeSt is unaffected, showing a remarkable robustness to reduction in sample size, thus suggesting good generalization capabilities. 

\begin{figure}
	\begin{center}
		\includegraphics[width=0.82\textwidth]{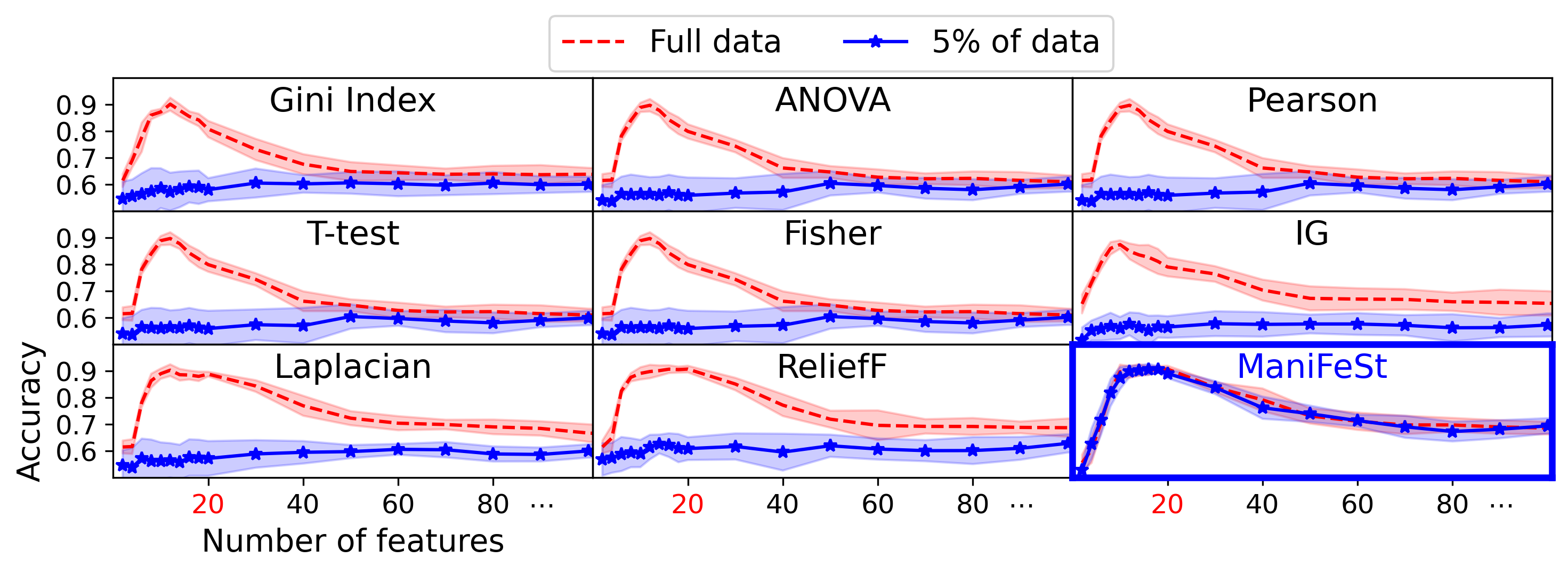}
		\caption{Classification accuracy as a function of the features number on the Madelon dataset. 
		The curves indicate the average test accuracy, and the shaded area represents the standard deviation.}
		\label{fig:madelon}
	\end{center}
\end{figure}


\subsection{Clusters on a Hypercube}

We simulate a variant of the Madelon dataset \cite{guyon2003design} using the scikit-learn function make\_classification(). We consider this variant because here the ground truth is available, whereas the Madelon dataset lacks information on the relevant features. See more details on the dataset generation in Appendix \ref{app:experiments}.

We generate 2000 samples consisting of 200 features with 10 relevant features and split the dataset into train and test sets with 1500 and 500 samples, respectively. For all FS methods, we use only 50 train samples to emphasize the effectiveness of ManiFeSt with only a few labeled samples.
Then, we select the top 10 features according to each FS method. An SVM is optimized using the entire train set with the selected features.
We repeat this procedure using 50 cross-validation iterations.

Fig. \ref{fig:hypercube}(a) presents the number of correct selections obtained by the tested FS methods. The median and average values are denoted by red lines and circles. The boundaries of the box indicate the 25\ts{th} and 75\ts{th} percentiles. We see that ManiFeSt outperforms the competing methods by a large margin using a relatively small number of samples.

Fig. \ref{fig:hypercube}(b) shows the t-SNE visualization \cite{van2008visualizing} of the test samples using all the features (left), top 10 features selected by ReliefF (middle), and top 10 features selected by ManiFeSt (right). The color (red and green) denotes the (hidden) class label. We see that both FS methods lead to a better class separation and that the separation using ManiFeSt is more pronounced. We report that ManiFeSt yields 10 (out of 10) correct selections, whereas ReliefF only 6. In addition, the average classification accuracy is depicted in parentheses, further demonstrating the advantage of ManiFeSt over ReliefF.



\begin{figure}
	\begin{center}
		\includegraphics[width=0.82\textwidth]{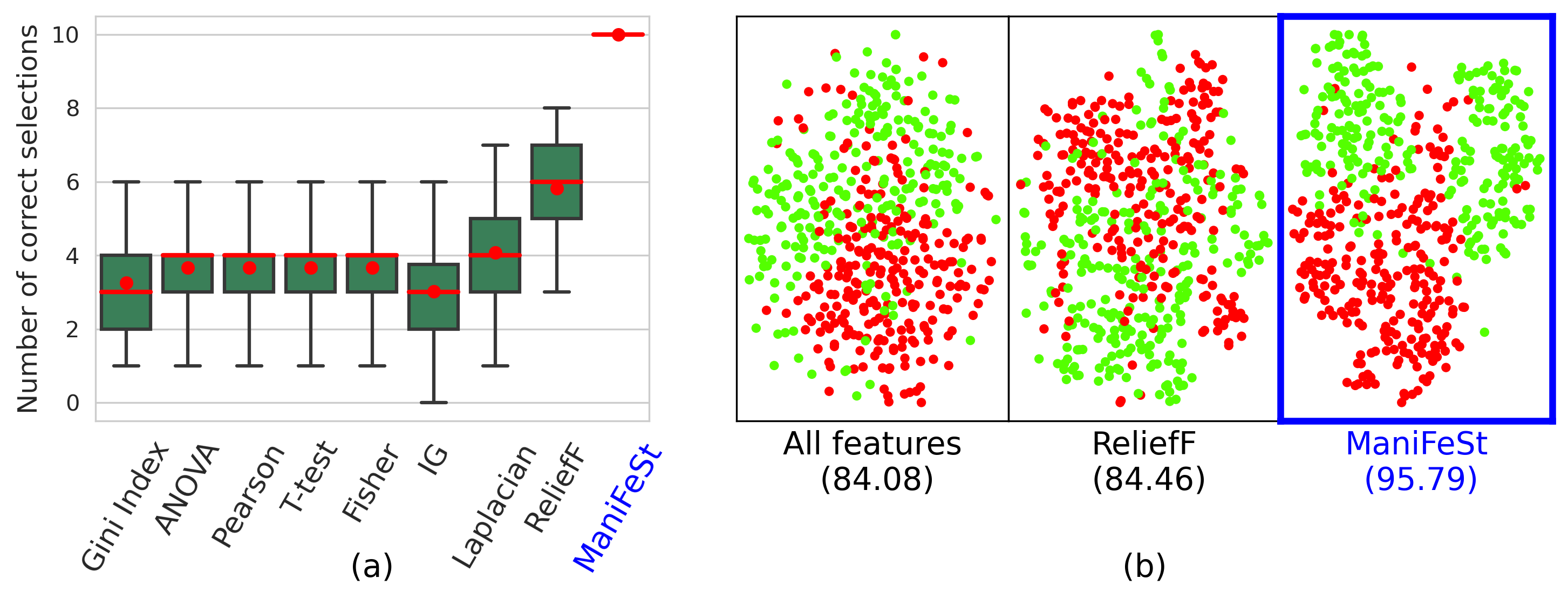}
		\caption{Results on the simulated variant of the Madelon dataset. (a) Boxplot of the number of correct selections indicating the 25\ts{th} and the 75\ts{th} percentiles. The median and average are denoted by red lines and circles, respectively. (b) t-SNE visualization of the test samples using all the features (left), top 10 features selected by ReliefF (middle), and top 10 features selected by ManiFeSt (right).}
		\label{fig:hypercube}
	\end{center}
\end{figure}

\subsection{Colon Cancer Gene Expression}

We test a dataset of colon cancer gene expression samples \cite{alon1999broad}, which is relatively small, typical to the biological domain. 
The dataset consists of the expression levels of 2000 genes (features) in 62 tissues (samples), of which 22 are normal, and 40 are of colon cancer.
The samples are split to 90\% train and 10\% test sets.
Results are averaged over 50 cross-validation iterations.

Fig.~\ref{fig:colon}(a) shows the average classification accuracy for different subsets of features. The solid and dashed curves represent the average validation and test accuracy, respectively. 
ManiFeSt achieves the best test accuracy of $86.9\%$, whereas the test accuracy of competing methods is $85.8\%$ or below. 

Even though generalization can be improved by applying FS, the FS itself is still prone to overfitting.
Indeed, from the gap between the validation and test accuracy, we see that ManiFeSt generalizes well compared to all other competing methods. 
To further test the generalization capabilities of ManiFeSt, in Appendix \ref{app:experiments}, we present the generalization error obtained by ManiFeSt for three different kernel scales, i.e., $\sigma_\ell$ in Eq. \eqref{eq:kernels}. The results imply that the larger the scale is, i.e., the more feature associations are captured by the kernel, the smaller the generalization error becomes.
This may suggest that considering the associations between the features enhances generalization, in contrast to the competing methods that only consider univariate feature properties.

While ManiFeSt exhibits enhanced generalization capabilities, its maximal performance is achieved by using more features (200 compared to 40). Our empirical examination revealed that ManiFeSt selects some irrelevant features because it analyzes feature associations rather than each feature separately.
Therefore, ManiFeSt might identify features without any discriminative capabilities, through their connections to other relevant and discriminative features (see details in Appendix \ref{app:experiments}). Still, despite the selection of irrelevant features, ManiFeSt facilitates the best test accuracy.

To alleviate the selection of irrelevant features, we propose combining classical univariate criteria with our multivariate score of ManiFeSt \eqref{eq:score}. In Fig.~\ref{fig:colon}(b), we present the results obtained when combining ManiFeSt with ReliefF by summing their normalized feature scores. We see that this simple combination results in an improved performance. Now, the maximal accuracy is $88.1\%$, and it is obtained by selecting only 40 features. This result calls for further research, exploring systematic ways to combine the multivariate standpoint of ManiFeSt with univariate considerations.

The enhanced generalization capabilities are demonstrated here only with respect to \emph{filter} methods, since embedded and wrapper methods typically suffer from large generalization errors when applied to small datasets \cite{brown2012conditional,bolon2013review,venkatesh2019review}.
To support this claim, we report that a recent embedded method applied to the colon dataset obtained test accuracy of $83.85\%$ \cite{https://doi.org/10.48550/arxiv.2106.06468}, outperforming various other embedded methods. By using the same train-test split scheme (49/13) as in \cite{https://doi.org/10.48550/arxiv.2106.06468}, ManiFeSt achieves test accuracy of $85.38\%$ with 400 features. The combination with ReliefF obtains $85.23\%$ accuracy with 80 features. 
See more comparisons in Appendix \ref{app:experiments}.  
We note that filter methods are usually used as preprocessing for wrapper and embedded methods \cite{alshamlan2015genetic,shaban2020new,peng2010novel}. In such an approach, ManiFeSt may provide explainable prepossessing without eliminating multivariate structures unlike existing filters.




\begin{figure}
	\begin{center}
		\includegraphics[width=1\textwidth]{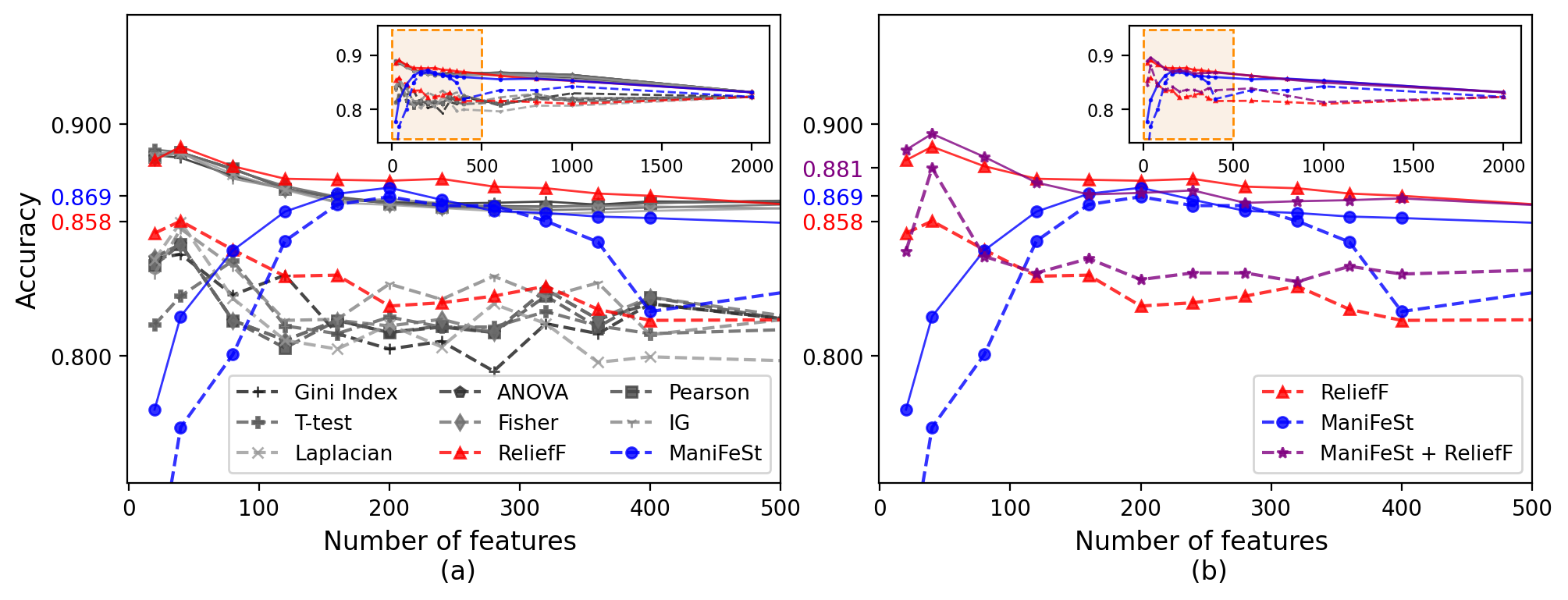}
		\caption{Accuracy as a function of the number of features on the gene expression dataset. 
		The dashed and solid curves represent the average test and validation accuracy, respectively. 
		}
		\label{fig:colon}
	\end{center}
\end{figure}

\section{Limitations and Future Directions}
\label{sec:limitations}

Our method has several limitations; we outline them and propose possible remedies.
First, ManiFeSt is designed for binary classification. The general multi-class case could be addressed by one-vs-all or one-vs-one approaches \cite{izetta2017improved}. Alternatively, extending the Riemannian composition to multiple kernels is possible and will be addressed in future work. 
Second, as we demonstrated empirically, analyzing multivariate associations rather than univariate, sometimes leads to the selection of irrelevant features. In future work, we will investigate combinations of (classical) univariate criteria and ManiFeSt, making systematic and precise the presented ad hoc combination of ReliefF and ManiFeSt.
In addition, a similar approach can be used for reducing feature redundancies, which our method currently does not account for.
Third, as a kernel method, ManiFeSt cannot be applied to very large feature spaces (of order of magnitude $>10$K). Usually, this limitation is addressed by computing sparse kernels, however, in our work it is not possible because the composite kernels are not necessarily sparse, even if the two kernels are sparse. 
A possible remedy in such cases is to use invertible dimension reduction algorithms, such as PCA, prior to the application of ManiFeSt, select the features in the low dimensional space, and then map them back to the original space.

\section{Conclusions}
\label{sec:conclusions}

In this work, we propose a supervised FS method. The proposed method, which is termed ManiFeSt, identifies discriminative features by comparing the multi-feature associations of each class. 
To this end, ManiFeSt employs a geometric approach that combines manifold learning and Riemannian geometry.
In contrast to common FS filter methods, our method learns the geometry in the feature space underlying the multi-feature associations rather than applying a univariate analysis.
We demonstrate that our multivariate approach reveals various data structures and facilitates improved generalization and consistency for FS based on small datasets, outperforming competing FS methods.

\section*{Acknowledgement}
This work was supported by the European Union's Horizon 2020 research and innovation programme under grant agreement No. 802735-ERC-DIFFOP.



\bibliographystyle{abbrv}
\bibliography{ManiFeSt_Manifold_based_Feature_Selection_for_Small_Data_Sets}



\newpage


\clearpage

\appendix

\section{Background on Riemannian Geometry}
\label{app:background}

\subsection{Riemannian Manifold of SPD Matrices}\label{appsb:SPDbg}
Let $\sff{S}_d$ denote the set of the symmetric matrices in $\R^{d{\times}d}$. $\boldsymbol{K}\in\sff{S}_d$ is an SPD matrix if all its eigenvalues are strictly positive. We denote the set of $d{\times}d$ SPD matrices as $\sff{P}_d$. The tangent space at $\boldsymbol{K}\in\sff{P}_d$ is the space of symmetric matrices and is denoted by $\sff{T}_{\boldsymbol{K}}\sff{P}_d$. 
When the tangent space is endowed with a proper metric the space of SPD matrices forms 
a differential Riemannian manifold \cite{moakher2005differential}. 
Various metrics have been proposed in the literature \cite{arsigny2007geometric,bhatia2019bures,lin2019riemannian,malago2018wasserstein,pennec2006riemannian}, of which the affine invariant metric \cite{pennec2006riemannian} and the log-Euclidean metric \cite{arsigny2007geometric} are arguably the most widely used, both allow formal definitions of geometric notions such as the geodesic path on the manifold.
We focus here on the affine invariant metric, defined for $\boldsymbol{S}_1,\boldsymbol{S}_2\in\sff{T}_{\boldsymbol{K}}\sff{P}_d$ as follows:
\begin{equation}
    \left\langle\boldsymbol{S}_1,\boldsymbol{S}_2\right\rangle_{\boldsymbol{K}} = \left\langle\boldsymbol{K}^{-1/2}\boldsymbol{S}_1\boldsymbol{K}^{-1/2},\boldsymbol{K}^{-1/2}\boldsymbol{S}_2\boldsymbol{K}^{-1/2}\right\rangle \label{app:eq:spd_inner}
\end{equation}
where $\left\langle\boldsymbol{A}_1,\boldsymbol{A}_2\right\rangle=\mathrm{Tr}\left(\boldsymbol{A}_1^T\boldsymbol{A}_2\right)$ is the standard Euclidean inner product.
Based on this metric, the unique geodesic path on the SPD manifold connecting two matrices $\boldsymbol{K}_1,\boldsymbol{K}_2\in\sff{P}_d$ is given by:
\begin{equation}
	\gamma^{\sff{P}}_{\boldsymbol{K}_1\rightarrow\boldsymbol{K}_2}(t) = \boldsymbol{K}_1^{1/2}\left(\boldsymbol{K}_1^{-1/2}\boldsymbol{K}_2\boldsymbol{K}_1^{-1/2}\right)^t\boldsymbol{K}_1^{1/2}\label{app:eq:spd_gp}
\end{equation}
where $0 \le t \le 1$. It holds that $\gamma^{\sff{P}}_{\boldsymbol{K}_1\rightarrow\boldsymbol{K}_2}(0)=\boldsymbol{K}_1$ and $\gamma^{\sff{P}}_{\boldsymbol{K}_1\rightarrow\boldsymbol{K}_2}(1)=\boldsymbol{K}_2$.

The projection of a point (symmetric matrix) in the tangent space $\boldsymbol{S} \in \sff{T}_{\boldsymbol{K}}\sff{P}_d$ to the SPD manifold is given by the following exponential map:
\begin{equation}
	\text{Exp}_{\boldsymbol{K}}(\boldsymbol{S}) =
	\boldsymbol{K}^{1/2}\exp\left(\boldsymbol{K}^{-1/2}\boldsymbol{S}\boldsymbol{K}^{-1/2}\right)\boldsymbol{K}^{1/2}
\end{equation}
where the result $\widetilde{\boldsymbol{K}}=\text{Exp}_{\boldsymbol{K}}(\boldsymbol{S}) \in \sff{P}_d$ is an SPD matrix.

The inverse projection of $\widetilde{\boldsymbol{K}} \in \sff{P}_d$ to the tangent space is given by the following logarithmic map:
\begin{equation}
	\text{Log}_{\boldsymbol{K}}(\widetilde{\boldsymbol{K}}) =
	\boldsymbol{K}^{1/2}\log\left(\boldsymbol{K}^{-1/2}\widetilde{\boldsymbol{K}}\boldsymbol{K}^{-1/2}\right)\boldsymbol{K}^{1/2}\label{eq:logmap}
\end{equation}
where the result $\text{Log}_{\boldsymbol{K}}(\boldsymbol{\widetilde{K}}) \in \sff{T}_{\boldsymbol{K}}\sff{P}_d$ is a symmetric matrix in the tangent space.

Further details on the SPD manifold are provided in \cite{bhatia2009positive,pennec2006riemannian}.

\subsection{Riemannian Manifold of SPSD Matrices}
To mitigate the requirement for full rank SPD matrices, several Riemannian geometries have been proposed for symmetric positive semi-definite matrices (SPSD) \cite{bonnabel2010riemannian,massart2020quotient,vandereycken2009embedded,vandereycken2013riemannian}. 
We focus on the one proposed in \cite{bonnabel2010riemannian}, which generalizes the affine-invariant geometry (presented in Appendix \ref{appsb:SPDbg}), forming the basis of our method. 
This SPSD geometry coincides with the SPD affine-invariant metric, when restricted to SPD matrices.

Let $\sff{S}^+_{d,k}$ denote the set of SPSD matrices of size $d\times d$ and fixed rank $k<d$.
Any $\boldsymbol{K}\in\sff{S}^{+}_{d,k}$ can be represented by $\boldsymbol{K}=\boldsymbol{G}\boldsymbol{P}\boldsymbol{G}^T$, where $\boldsymbol{P}\in\sff{P}_k$ is a $k\times k$ SPD matrix, $\boldsymbol{G}\in\sff{V}_{d,k}$, and $\sff{V}_{d,k}$ denotes the set of $d\times k$ matrices with orthonormal columns. 
This representation of $\boldsymbol{K}$ can be obtained by its eigenvalue decomposition for example.
This representation implies that SPSD matrices can be represented by the pair $\left(\boldsymbol{G},\boldsymbol{P}\right)$, which is termed the structure space representation. Note that the structure space representation is unique up to orthogonal transformations, $\boldsymbol{O}\in\sff{O}_k$, i.e., $\boldsymbol{K}\cong\left(\boldsymbol{G}\boldsymbol{O},\boldsymbol{O}^T\boldsymbol{P}\boldsymbol{O}\right)$. 
It follows that the space $\sff{S}^+_{d,k}$ has a quotient manifold representation, $\sff{S}^+_{d,k}\cong\left(\sff{V}_{d,k}\times\sff{P}_k\right)/\sff{O}_k$.
The structure space representation pair is thus composed of SPD matrices $\boldsymbol{P}\in\sff{P}_k$, whose space forms a Riemannian manifold with the affine-invariant metric \eqref{app:eq:spd_inner}, and matrices $\boldsymbol{G}\in\sff{G}_{d,k}$, where $\sff{G}_{d,k}$ denotes the set of $k$-dimensional subspaces of $\mathbb{R}^d$.
The set $\sff{G}_{d,k}$ forms the Grassmann manifold with an appropriate inner product on its tangent space $\sff{T}_{\boldsymbol{G}}\sff{G}_{d,k}=\{\boldsymbol{\Delta}=\boldsymbol{G}_{\perp}\boldsymbol{B}\ \vert\ \boldsymbol{B}\in\mathbb{R}^{(d-k)\times k}\}$, given by $\left\langle\boldsymbol{\Delta}_1,\boldsymbol{\Delta}_2\right\rangle_{\boldsymbol{G}}=\left\langle\boldsymbol{B}_1,\boldsymbol{B}_2\right\rangle$, where $\boldsymbol{G}_{\perp}\in\sff{V}_{d,d-k}$ is the orthogonal complement of $\boldsymbol{G}$.
To define the geodesic path between two points, $\boldsymbol{G}_1$ and $\boldsymbol{G}_2$, on the Grassmann manifold, let $\boldsymbol{G}_2^T\boldsymbol{G}_1=\boldsymbol{O}_2\boldsymbol{\Sigma}\boldsymbol{O}_1^T$ denote the singular value decomposition (SVD), where $\boldsymbol{O}_1,\boldsymbol{O}_2\in\mathbb{R}^{k\times k}$, $\boldsymbol{\Sigma}$ is a diagonal matrix with $\sigma_i=\cos\theta_i$ on its diagonal, and $\theta_i$ denote the principal angles between the two subspaces represented by $\boldsymbol{G}_1$ and $\boldsymbol{G}_2$. Assuming $\max_i \theta_i\leq\pi/2$, the closed-form for the geodesic path is then given by:
\begin{equation}
    \gamma^{\sff{G}}_{\boldsymbol{G}_1\rightarrow\boldsymbol{G}_2}(t)=\boldsymbol{G}_1\boldsymbol{O}_1\cos\left(\boldsymbol{\Theta}t\right)+\boldsymbol{X}\sin\left(\boldsymbol{\Theta}t\right)\label{app:eq:grassmann_gp}
\end{equation}
where $\boldsymbol{\Theta}=\mathrm{diag}(\theta_1,\dots,\theta_k)$, $\theta_i=\arccos\sigma_i$, and $\boldsymbol{X}=\left(\boldsymbol{I}-\boldsymbol{G}_1\boldsymbol{G}_1^T\right)\boldsymbol{G}_2\boldsymbol{O}_2\left(\sin\boldsymbol{\Theta}\right)^{\dagger}$, where $()^{\dagger}$ denotes the pseudo-inverse. 

Following the structure space representation of $\sff{S}^+_{d,k}$, its tangent space is defined in \cite{bonnabel2010riemannian} by $\sff{T}_{\left(\boldsymbol{G},\boldsymbol{P}\right)}\sff{S}^+_{d,k}=\{(\boldsymbol{\Delta},\boldsymbol{S}):\boldsymbol{\Delta}\in\sff{T}_{\boldsymbol{G}}\sff{G}_{d,k},\boldsymbol{S}\in\sff{T}_{\boldsymbol{P}}\sff{P}_k\}$, and the inner product on the tangent space is given by the sum of the inner products on the two components:
\begin{equation}
    \left\langle (\boldsymbol{\Delta}_1,\boldsymbol{S}_1),(\boldsymbol{\Delta}_2,\boldsymbol{S}_2) \right\rangle_{(\boldsymbol{G},\boldsymbol{P})} = \left\langle\boldsymbol{\Delta}_1,\boldsymbol{\Delta}_2\right\rangle_{\boldsymbol{G}} + m\left\langle\boldsymbol{S}_1,\boldsymbol{S}_2\right\rangle_{\boldsymbol{P}}\label{app:eq:spsd_inner}
\end{equation}
for $m>0$, where $(\boldsymbol{\Delta}_\ell,\boldsymbol{S}_\ell)\in\sff{T}_{(\boldsymbol{G},\boldsymbol{P})}\sff{S}^+_{d,k}$, and $\left\langle\boldsymbol{S}_1,\boldsymbol{S}_2\right\rangle_{\boldsymbol{P}}$ is defined as in \eqref{app:eq:spd_inner}.
There is no closed-form expression for the geodesic path connecting two points on $\sff{S}^+_{d,k}$, however, the following approximation is proposed in \cite{bonnabel2010riemannian}:
\begin{equation}
    \tilde{\gamma}_{\boldsymbol{K}_1\rightarrow\boldsymbol{K}_2}(t) = \gamma^{\sff{G}}_{\boldsymbol{G}_1\rightarrow\boldsymbol{G}_2}(t)\gamma^{\sff{P}}_{\boldsymbol{P}_1\rightarrow\boldsymbol{P}_2}(t)(\gamma^{\sff{G}}_{\boldsymbol{G}_1\rightarrow\boldsymbol{G}_2}(t))^T\label{app:eq:spsd_gp}
\end{equation}
where $\boldsymbol{K}_{\ell}\cong\left(\boldsymbol{G}_{\ell},\boldsymbol{P}_{\ell}\right)$, $\boldsymbol{K}_{\ell}\in\sff{S}^+_{d,k}$, $\boldsymbol{P}_{\ell}=\boldsymbol{O}_{\ell}^T\boldsymbol{G}_{\ell}^T\boldsymbol{K}_{\ell}\boldsymbol{G}_{\ell}\boldsymbol{O}_{\ell}$ due to the non-uniqueness of the decomposition (up to orthogonal transformations), $\gamma^{\sff{P}}_{\boldsymbol{P}_1\rightarrow\boldsymbol{P}_2}(t)$ is defined by \eqref{app:eq:spd_gp} and $\gamma^{\sff{G}}_{\boldsymbol{G}_1\rightarrow\boldsymbol{G}_2}(t)$ is defined by \eqref{app:eq:grassmann_gp}.

\section{Difference Operator for SPSD Matrices}
\label{app:spsd}

Following \cite{shnitzer2022spatiotemporal}, and based on the approximation of the geodesic path in \eqref{app:eq:spsd_gp}, we define the mean and difference operators for two SPSD matrices, $\boldsymbol{K}_1$ and $\boldsymbol{K}_2$, whose structure space representation is given by $\boldsymbol{K}_{\ell}\cong\left(\boldsymbol{G}_{\ell},\boldsymbol{P}_{\ell}\right)$ where $\boldsymbol{G}_{\ell} \in\sff{V}_{d,k}$ and $\boldsymbol{P}_{\ell} \in \mathcal{P}_k$. 
Define $\boldsymbol{G}_2^T\boldsymbol{G}_1=\boldsymbol{O}_2\boldsymbol{\Sigma}\boldsymbol{O}_1^T$ as the SVD of $\boldsymbol{G}_2^T\boldsymbol{G}_1$ and set $\boldsymbol{P}_{\ell}=\boldsymbol{O}_{\ell}^T\boldsymbol{G}_{\ell}^T\boldsymbol{K}_{\ell}\boldsymbol{G}_{\ell}\boldsymbol{O}_{\ell}$, $\ell=1,2$.
The mean operator is then defined analogously to \eqref{eq:mean} as the mid-point of $\tilde{\gamma}_{\boldsymbol{K}_1\rightarrow\boldsymbol{K}_2}(t)$:
\begin{equation}
    \widetilde{\boldsymbol{M}}=\tilde{\gamma}_{\boldsymbol{K}_1\rightarrow\boldsymbol{K}_2}(0.5)=\gamma^{\sff{G}}_{\boldsymbol{G}_1\rightarrow\boldsymbol{G}_2}(0.5)\gamma^{\sff{P}}_{\boldsymbol{P}_1\rightarrow\boldsymbol{P}_2}(0.5)(\gamma^{\sff{G}}_{\boldsymbol{G}_1\rightarrow\boldsymbol{G}_2}(0.5))^T\label{app:eq:spsd_mean}
\end{equation}
Denote the structure space representation of the mean operator by $\widetilde{\boldsymbol{M}}\cong (\boldsymbol{G}_{\boldsymbol{M}},\boldsymbol{P}_{\boldsymbol{M}})$ and define $\boldsymbol{G}_1^T\boldsymbol{G}_{\boldsymbol{M}}=\widetilde{\boldsymbol{O}}_{1}\widetilde{\boldsymbol{\Sigma}}\boldsymbol{O}_{\boldsymbol{M}}^T$ as the SVD of $\boldsymbol{G}_1^T\boldsymbol{G}_{\boldsymbol{M}}$.
Set $\boldsymbol{P}_{\boldsymbol{M}}=\boldsymbol{O}_{\boldsymbol{M}}^T\boldsymbol{G}_{\boldsymbol{M}}^T\widetilde{\boldsymbol{M}}\boldsymbol{G}_{\boldsymbol{M}}\boldsymbol{O}_{\boldsymbol{M}}$, and $\widetilde{\boldsymbol{P}}_{1}=\widetilde{\boldsymbol{O}}_1^T\boldsymbol{G}_1^T\boldsymbol{K}_1\boldsymbol{G}_1\widetilde{\boldsymbol{O}}_1$.
The SPSD difference operator is defined by:
\begin{equation}
    \widetilde{\boldsymbol{D}} = \gamma^{\sff{G}}_{\boldsymbol{G}_{\boldsymbol{M}}\rightarrow\boldsymbol{G}_1}(1)\mathrm{Log}_{\boldsymbol{P}_{\boldsymbol{M}}}(\widetilde{\boldsymbol{P}}_1)(\gamma^{\sff{G}}_{\boldsymbol{G}_{\boldsymbol{M}}\rightarrow\boldsymbol{G}_1}(1))^T\label{app:eq:spsd_diff}
\end{equation}
where the logarithmic map on the SPD manifold is defined in \eqref{eq:logmap} and the geodesic path on the Grassmann manifold is defined in \eqref{app:eq:grassmann_gp}.

The computation of $\widetilde{\boldsymbol{M}}$ and $\widetilde{\boldsymbol{D}}$, as well as the resulting ManiFeSt score for the SPSD case, are summarized in Algorithm \ref{app:alg:ManiFeSt_spsd}.

\begin{algorithm}
	\hspace*{\algorithmicindent} \textbf{Input:}   Two class datasets $\boldsymbol{X}^{(1)}$ and $\boldsymbol{X}^{(2)}$ \\
	\hspace*{\algorithmicindent} \textbf{Output:}   FS score $\boldsymbol{r}$ (SPSD case)
	
	\caption{ManiFeSt Score for SPSD Matrices}\label{app:alg:ManiFeSt_spsd}
	\begin{algorithmic}[1]
		\State Construct kernels $\boldsymbol{K}_{1}$ and $\boldsymbol{K}_{2}$ for the two datasets    \Comment{According to (\ref{eq:kernels}) }
		\State Set $k=\min\{\mathrm{rank}(\boldsymbol{K}_1),\mathrm{rank}(\boldsymbol{K}_2)\}$
		\State Define $\boldsymbol{K}_{\ell}=\boldsymbol{G}_{\ell}\boldsymbol{P}_{\ell}\boldsymbol{G}_{\ell}$ and $\boldsymbol{P}_{\ell}=\boldsymbol{O}_{\ell}^T\boldsymbol{G}_{\ell}^T\boldsymbol{K}_{\ell}\boldsymbol{G}_{\ell}\boldsymbol{O}_{\ell}$ \newline where $\boldsymbol{G}_{2}^T\boldsymbol{G}_{1}=\boldsymbol{O}_2\boldsymbol{\Sigma}\boldsymbol{O}_1$
		\State Compute $\gamma^{\sff{G}}_{\boldsymbol{G}_1\rightarrow\boldsymbol{G}_2}(0.5)$ \Comment{According to (\ref{app:eq:grassmann_gp}) }
		\State Compute $\gamma^{\sff{P}}_{\boldsymbol{P}_1\rightarrow\boldsymbol{P}_2}(0.5)$ \Comment{According to (\ref{app:eq:spd_gp}) }
		\State Build the mean operator $\widetilde{\boldsymbol{M}}$ \Comment{According to (\ref{app:eq:spsd_mean}) }
		\State Define $\widetilde{\boldsymbol{M}}=\boldsymbol{G}_{\boldsymbol{M}}\boldsymbol{P}_{\boldsymbol{M}}\boldsymbol{G}_{\boldsymbol{M}}$, $\boldsymbol{P}_{\boldsymbol{M}}=\boldsymbol{O}_{\boldsymbol{M}}^T\boldsymbol{G}_{\boldsymbol{M}}^T\boldsymbol{M}\boldsymbol{G}_{\boldsymbol{M}}\boldsymbol{O}_{\boldsymbol{M}}$ and $\widetilde{\boldsymbol{P}}_{1}=\widetilde{\boldsymbol{O}}_{1}^T\boldsymbol{G}_{1}^T\boldsymbol{K}_1\boldsymbol{G}_{1}\widetilde{\boldsymbol{O}}_{1}$ \newline where $\boldsymbol{G}_{1}^T\boldsymbol{G}_{\boldsymbol{M}}=\widetilde{\boldsymbol{O}}_{1}\widetilde{\boldsymbol{\Sigma}}\boldsymbol{O}_{\boldsymbol{M}}$
		\State Compute $\mathrm{Log}_{\boldsymbol{P}_{\boldsymbol{M}}}(\widetilde{\boldsymbol{P}}_1)$ \Comment{According to (\ref{eq:logmap}) }
		\State Compute $\gamma^{\sff{G}}_{\boldsymbol{G}_{\boldsymbol{M}}\rightarrow\boldsymbol{G}_1}(1)$ \Comment{According to (\ref{app:eq:grassmann_gp}) }
		\State Build the difference operator $\widetilde{\boldsymbol{D}}$ \Comment{According to (\ref{app:eq:spsd_diff}) }
        \State Apply eigenvalue decomposition to $\widetilde{\boldsymbol{D}}$ and compute the FS score $\boldsymbol{r}$ \Comment{According to \eqref{eq:score}}
        
	\end{algorithmic}
\end{algorithm}

\section{Experiments -- More Details and Additional Results}
\label{app:experiments}

\subsection{Implementation Details}

In all the experiments, the data is split to train and test sets with nested cross-validation. The data normalization, FS, and SVM hyper-parameters tuning are applied to the train set to prevent test-train leakage.

All FS methods are tuned to achieve the best accuracy results on the validation set or the maximal number of correct selections when the ground-truth relevant feature identities are available. The train set is divided using 10-fold cross-validation for all datasets. This procedure is repeated with shuffled samples for small datasets for better tuning.

\paragraph{Data normalization.}
Following \cite{atashgahi2019brain}, the features in the Madelon dataset are normalized by removing the mean and rescaling to unit variance. This is implemented using the standard sklearn function. 
The other datasets do not require normalization. 

\paragraph{FS hyperparameter tuning.}

Both IG and ReliefF FS methods have a number of nearest neighbors parameter, since the extension of the classic IG score from discrete to continuous features makes use of the $k$ nearest neighbors \cite{ross2014mutual}. 
We tune the number of neighbors for IG and ReliefF over the grid $k=\{1,3,5,10,15,20,30,50,100\}$. For the Laplacian score, the samples' kernel scale is tuned to the $i_{th}$ percentile of Euclidean distances over the grid $i=\{1,5,10,30,50,70,90,95,99\}$. 
ManiFeSt only requires tuning of the features' kernel scale. For the illustrative example, the scale $\sigma_{\ell}$ is set to the median of Euclidean distances, for best visualization. For the XOR and Madelon problems, the scale is set to the median of Euclidean distances multiplied by a factor $0.1$. 
Since the multi-feature associations of the relevant features are distinct in these two datasets, no scale tuning was required. Conversely, for the remaining datasets, the scale factor is tuned to the $i_{th}$ percentile of the Euclidean distances over the grid $i=\{5,10,30,50,70,90,95\}$. 

\paragraph{SVM hyperparameter tuning.}
When the ground-truth of which features are relevant is not available, we apply an SVM classifier to the selected subset of features in order to evaluate the FS. 
For the SVM hyperparameter tuning, we follow \cite{hsu2003practical}. We use an RBF kernel and perform a grid search on the penalty parameter $C$ and the kernel scale $\gamma$. $C$ and $\gamma$ are tuned over exponentially growing sequences, $C=\{2^{-5},2^{-2},2^{1},2^{4},2^{7},2^{10},2^{13}\}$ and $\gamma=\{2^{-15},2^{-12},2^{-9},2^{-6},2^{-3},2^{0},2^{3}\}$.

\paragraph{Computing resources.}
All the experiments were performed using Python on a standard PC with an Intel i7-7700k CPU and 64GB of RAM without GPUs.
We note that according to a recent work \cite{fawzi2021faster}, using GPUs could allow a faster computation of the eigenvalue decomposition required by ManiFeSt.

\paragraph{FS source code.} 
The competing methods were implemented as follows.
The IG~\cite{vergara2014review} and ANOVA~\cite{kao2008analysis} methods were computed using the scikit-learn package. For Gini-index~\cite{shang2007novel}, t-test~\cite{davis1986statistics}, Fisher~\cite{duda2006pattern}, Laplacian~\cite{he2005laplacian}, and ReliefF~\cite{robnik2003theoretical}, we use the skfeature repository developed by the Arizona State University~\cite{li2018feature}. The Pearson correlation~\cite{battiti1994using} is implemented by the built-in Panda package correlation. 


\paragraph{Details on the hypercube dataset.}

We create a $10$-dimensional hypercube embedded in $\R^{10}$. Then, 2000 points are generated and grouped into 4 clusters. The data in each cluster are normally distributed and centered at one of vertices of the hypercube. We define two classes, where each class  consists of two clusters. The partition of the 4 clusters to the two classed is performed in an arbitrarily manner.

To create the dataset, these 10-dimensional points are mapped to $\mathbb{R}^{200}$ by appending coordinates with random noise, so that each point in the dataset consists of 200 features out of which only 10 are relevant.
To obtain the ManiFeSt score results in Figure \ref{fig:hypercube}, the kernel matrix $\boldsymbol{K}$ was symmetrically normalized in three iterations according to $\boldsymbol{K}_{\ell+1}=\boldsymbol{D}^{-1/2}_\ell\boldsymbol{K}_\ell\boldsymbol{D}^{-1/2}_\ell$, where $\boldsymbol{D}_\ell$ is a diagonal matrix with $d_i=\sum_j K(i,j)$ on its diagonal. This normalization converges in infinity to a doubly stochastic kernel, which was shown to be more robust to noise \cite{landa2021doubly,zass2005unifying}.
When using the unnormalized kernel from \eqref{eq:kernels}, ManiFeSt obtains slightly lower results (median $9$, mean $9.2$ and 25\ts{th}-75\ts{th} percentiles $[9,10]$) but still outperforms all the competing methods.
We remark that the results of ManiFeSt in all other applications were obtained with the unnormalized kernel.



\subsection{Additional Results}

\subsubsection{Fashion-MNIST: another illustrative example}
\label{app:Ilustration2}

We use the Fashion-MNIST dataset \cite{xiao2017fashion} for illustration.
We generate two sets: one consists of 1500 images of pants and the other consists of 1500 images of shirts. 
Fig.~\ref{fig:Ilustration2} is the same as Fig.~\ref{fig:Ilustration}, presenting the results obtained by ManiFeSt for this example.


%
In Fig.~\ref{fig:Ilustration2}(middle), we see that the leading eigenvectors of the composite difference kernel, $\boldsymbol{D}$, indeed capture the main conceptual differences between the two clothes.
These differences include the gap between the pants' legs, the gap between the shirts' sleeves, and the shirt collar.
As shown in Fig.~\ref{fig:Ilustration2}(right), the ManiFeSt score, which weighs the eigenvectors by their respective eigenvalues, provides a consolidated measure of the discriminative pixels. 

\begin{figure}[!h]
	\begin{center}
		\includegraphics[width=1\textwidth]{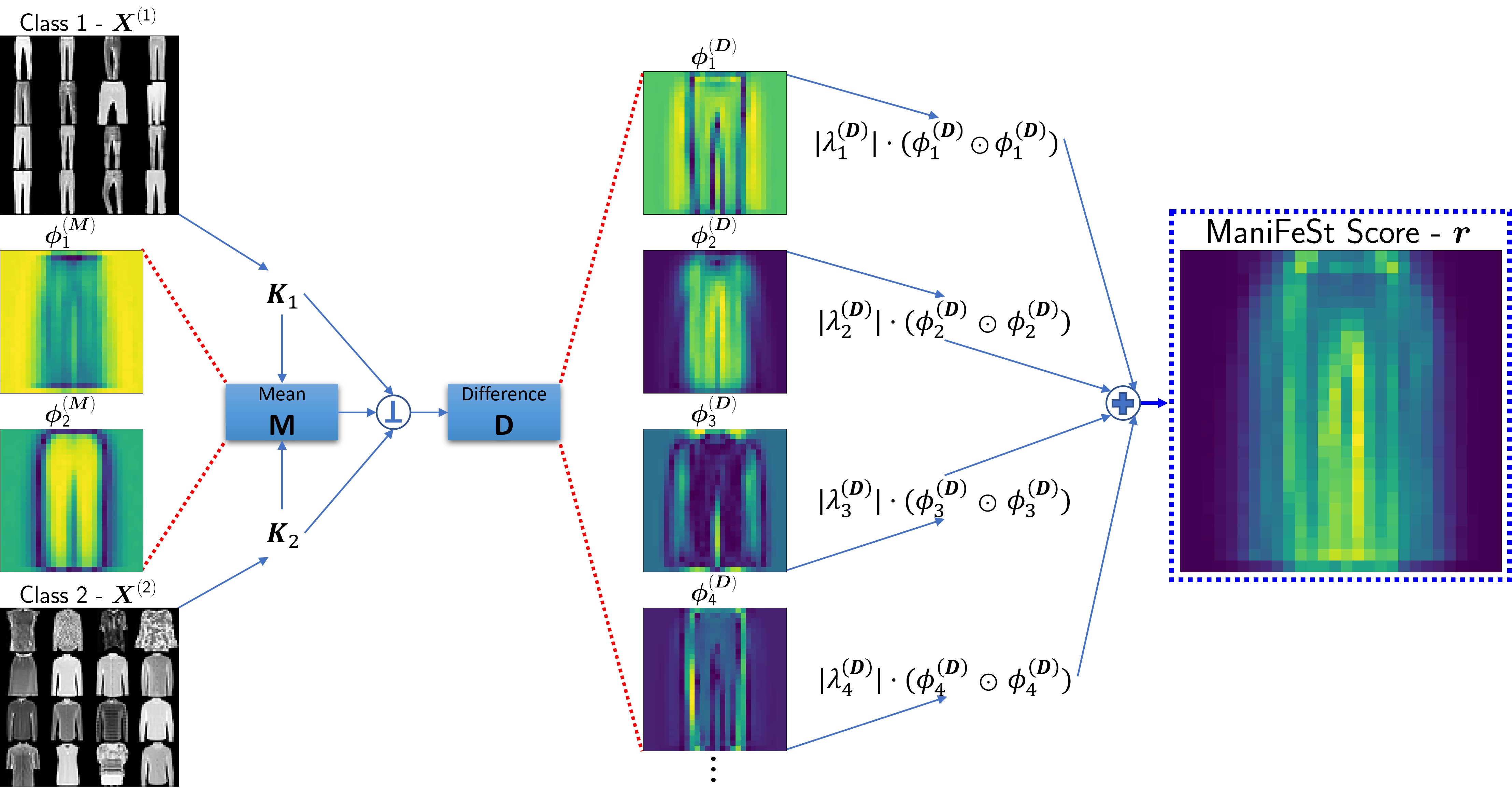}
		\caption{Ilustration of the proposed scheme and the resulting ManiFeSt score for images from the Fashion-MNIST dataset.}
		\label{fig:Ilustration2}
	\end{center}
\end{figure}




\subsubsection{Colon cancer gene expression: additional results}

\paragraph{More generalization tests.} 
To further demonstrate the generalization capabilities of ManiFeSt, we examine the effect of the kernel scale, i.e., $\sigma_\ell$ in Eq. \eqref{eq:kernels}, on the results of the colon dataset. In Fig.~\ref{fig:colon_app}, we present the generalization error obtained by ManiFeSt for three different kernel scales. We see that the larger the scale is, i.e., the more feature associations are captured by the kernel, the smaller the generalization error becomes. This result indicates that the multi-feature associations taken into account by ManiFeSt play a central role in its favorable generalization capabilities.




\begin{figure}
	\begin{center}
		\includegraphics[width=0.6\textwidth]{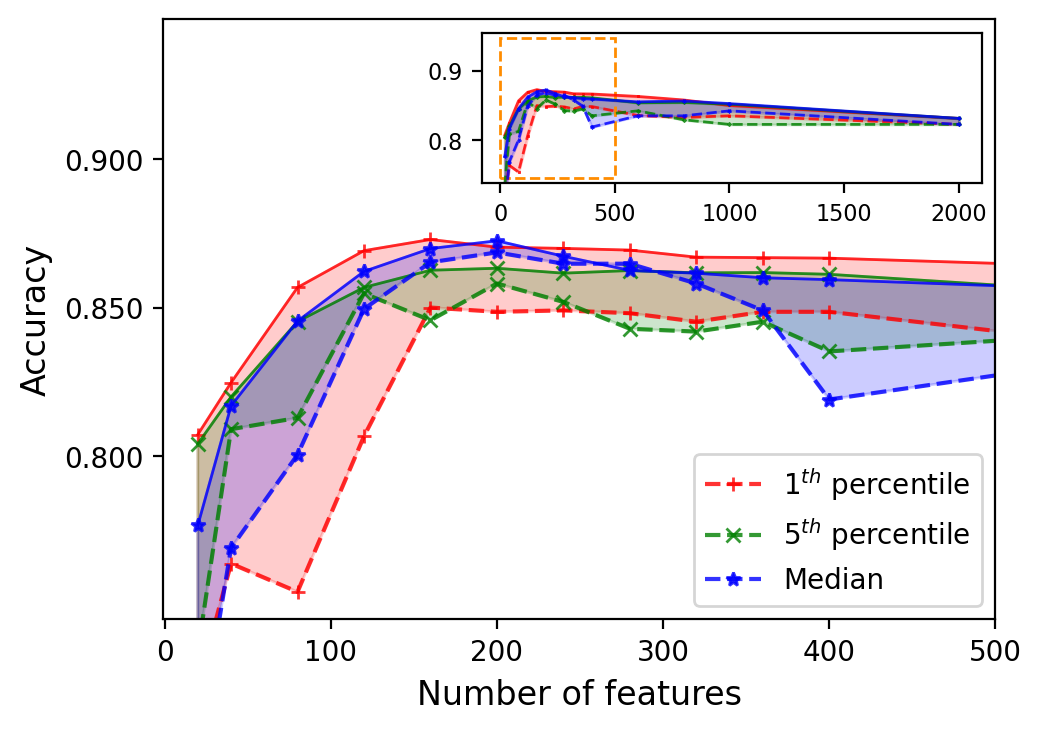}
		\caption{ManiFeSt with three different kernel scales. The dashed and solid lines represent the average test and validation accuracy, and the shaded area represents the validation-test generalization error.}
		\label{fig:colon_app}
	\end{center}
\end{figure}

We conclude this section on generalization with a possible direction for future investigation.
We speculate that the large generalization error demonstrated in Fig.~\ref{fig:colon} by the competing methods may suggest the presence of significant batch effects in the colon dataset, which is prototypical to such biological data \cite{lazar2013batch}. In contrast, the smaller generalization error achieved by ManiFeSt could indicate its robustness to batch effects.
Therefore, the robustness of ManiFeSt to batch effects will be studied in future work.

\paragraph{Comparison with embedded methods.}
To complement the experimental study, we report here recent results on the colon dataset \cite{https://doi.org/10.48550/arxiv.2106.06468} obtained by various embedded methods. 
These results are displayed in Table \ref{tab:colon_comparison} along with our result obtained by ManiFeSt. For a fair comparison, in this experiment we use the same train-test split (49/13) and average the results over 50 cross-validation iterations. 
We see in the table that ManiFeSt achieves the best mean accuracy, but with a larger standard deviation, compared with the leading competing embedded method (LLSPIN proposed in  \cite{https://doi.org/10.48550/arxiv.2106.06468}).


\begin{table}
  \caption{ManiFeSt Comparison with Embedded Methods}
  \label{tab:colon_comparison}
  \centering
  \begin{tabular}{ll}
    \toprule
    \cmidrule(r){1-2}
    Method          & Accuracy $\pm$ STD \\
    \midrule
    LASSO             & $81.54 \pm 9.85$    \\
    SVC               & $76.15 \pm 9.39$    \\
    RF                & $79.23 \pm 9.76$    \\
    XGBoost           & $76.15 \pm 12.14$    \\
    MLP               & $81.54 \pm 7.84$    \\
    Linear STG        & $74.62 \pm 11.44$    \\
    Nonlinear STG     & $76.15 \pm 13.95$    \\
    INVASE            & $76.92 \pm 12.40$    \\
    L2X               & $78.46 \pm 8.28$    \\
    TabNet            & $64.62 \pm 12.02$    \\
    REAL-x            & $75.38 \pm 12.78$    \\
    LSPIN             & $71.54 \pm 6.92$    \\
    LLSPIN            & $83.85 \pm 5.38$    \\
    \bf{ManiFeSt}     & $\boldsymbol{85.23 \pm 8.60}$    \\
    \bottomrule
  \end{tabular}
\end{table}

\subsubsection{Toy example: limitations of ManiFeSt}

ManiFeSt considers multivariate associations rather than univariate properties. In some scenarios, this might lead to the selection of irrelevant features or the misselection of relevant features.

We demonstrate this limitation using a toy example.
We simulate data consisting of $d=20$ binary features and $N=500$ instances. Each feature is sampled from a Bernoulli distribution. We consider two cases. In the first case, each instance is associated with a label that is equal to the first feature $f_1$, where $f_i$ denotes the $i$th feature. Accordingly, only $f_1$ is a relevant for the classification of the label. In the second case, we set $f_5=0$ to be a fixed constant, while the label is still determined by $f_1$.

In Fig. \ref{fig:app_limitations}, we present the normalized feature score obtained by ManiFeSt, averaged over 50 Monte-Carlo iterations of data generation. The green circles denote the average score, and the red dashes indicate the standard deviation. 

In Fig. \ref{fig:app_limitations}(a), we show the results for the first case. We see that ManiFeSt does not identify $f_1$, since the associations of this feature to the other features are not distinct. More specifically, the differences $f_1-f_j$ for every $j$ have the same statistics over samples from the two classes. In Fig. \ref{fig:app_limitations}(b), we show the results for the second case. We see that the relevant feature $f_1$ is captured, yet the irrelevant feature $f_5$ is also detected. This is due to the fact that now the association of $f_1$ with $f_5$ becomes distinct between the two classes. This result implies that ManiFeSt may identify features without any discriminative capabilities through their associations to other relevant and discriminative features. 

Fig. \ref{fig:colon}(b) suggests that the combination of a classical univariate criterion with the multivariate ManiFeSt score has potential to mitigate this limitation. In future work, we will further investigate the simultaneous utilization of univariate and multivariate properties.

\begin{figure}
	\begin{center}
		\includegraphics[width=1\textwidth]{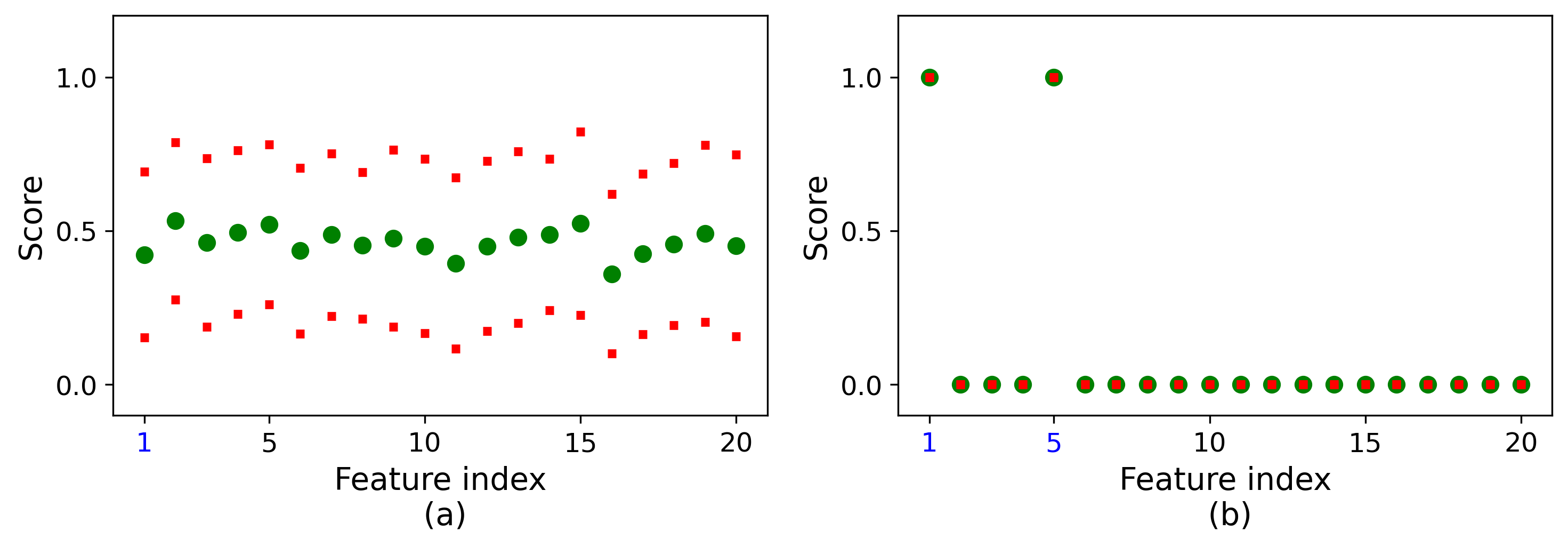}
		\caption{ManiFeSt score on the toy example, averaged over 50 Monte-Carlo iterations of data generation. The green circles denote the average score, and the red dashes indicate the standard deviation. (a) The first case. (b) The second case ($f_5\equiv0$).} 
		\label{fig:app_limitations}
	\end{center}
\end{figure}


\subsubsection{Gisette and Prostate cancer datasets: additional results}

We now test two additional datasets: Gisette and Prostate cancer.

The Gisette dataset \cite{guyon2008feature} is a synthetic dataset from the NIPS 2003 feature selection challenge. The dataset contains 7000 samples consisting of 5000 features, of which only 2500 are relevant features. The classification problem aims to discriminate between the digits 4 and 9, which were mapped into a high dimensional feature space. For more details, see \cite{guyon2003design}.

The Prostate cancer dataset \cite{singh2002gene} consists of the expression levels of 5966 genes (features) and 102 samples, of which 50 are normal and 52 are tumor samples.

The data is divided into train and test sets with a 10-fold cross-validation. 
For evaluation, an SVM classifier is optimized on the selected features subset. 

In Table \ref{tab:more_restuls}, we report the maximum accuracy and STD obtained by selecting any number of features. We note that the baseline test accuracy, when using all the features, is $98.04\%$ and $90.36\%$ for Gisette and Prostate cancer, respectively. 

To provide a clearer picture of these results, Fig.~\ref{fig:gisette_prostae} presents the test accuracy as a function of the number of features used for ManiFeSt and ReliefF, where Fig.~\ref{fig:gisette_prostae}(a) is for Gisette and Fig.~\ref{fig:gisette_prostae}(b) is for the Prostate cancer dataset.

We see that ManiFeSt is on par with the competing methods. Specifically, we see that it achieves high accuracy already by selecting a small number of features. 
In comparison to the applications shown in the paper, these datasets consist of a larger number of features, demonstrating ManiFeSt's capability to perform well in such scenarios as well.

\begin{table}
  \caption{Results on Gisette and Prostate Cancer Datasets}
  \label{tab:more_restuls}
  \centering
  \begin{tabular}{l|l|l|}
    \toprule
    
    {} & \multicolumn{1}{c|}{Gisette}    & \multicolumn{1}{|c|}{Prostate Cancer}               \\
    \cmidrule(r){2-3}
    Method          &Accuracy $\pm$ STD &Accuracy $\pm$ STD \\
    \midrule
    Gini Index         & $98.59\pm 0.38$ & $94.36\pm 8.42$    \\
    ANOVA              & $98.66\pm 0.47$ & $94.36\pm 8.42$    \\
    Pearson            & $98.66\pm 0.47$ & $94.36\pm 8.42$    \\
    T-test             & $98.66\pm 0.47$ & $94.36\pm 8.42$    \\
    Fisher             & $98.66\pm 0.47$ & $94.36\pm 8.42$    \\
    IG                 & $98.59\pm 0.44$ & $93.36\pm 8.28$    \\
    Laplacian          & $98.61\pm 0.4$ & $94.36\pm 8.42$    \\
    ReliefF            & $98.61\pm 0.44$ & $94.36\pm 8.42$    \\
    \bf{ManiFeSt}      & $98.64\pm 0.38$ & $95.27\pm 6.24$    \\
    \bottomrule
  \end{tabular}
\end{table}

\begin{figure}
	\begin{center}
		\includegraphics[width=1\textwidth]{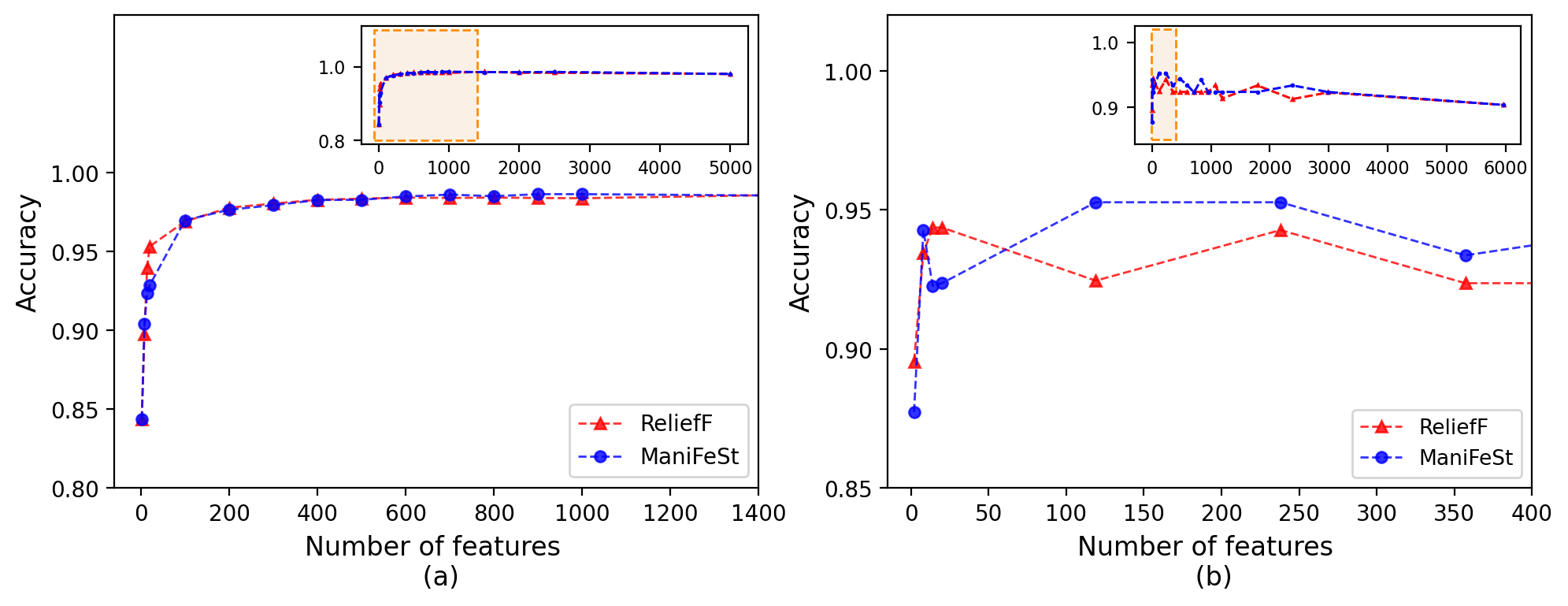}
		\caption{Test accuracy as a function of the number of features used for ManiFeSt and ReliefF. (a) Gisette. (b) Prostate cancer.}
		\label{fig:gisette_prostae}
	\end{center}
\end{figure}

\section{Additional Theoretical Foundation}
\label{app:theoretical}

Here, we provide additional interpretation for the ManiFeSt score, given in \eqref{eq:score}, which was derived based on (purely) geometric considerations. 
We show that high absolute values in eigenvectors of $\boldsymbol{D}$ that correspond to large eigenvalues (in absolute value), represent features with significantly different associations between the two classes.
Due to the eigenvalue weighting in the ManiFeSt score, this verifies that discriminative features will get high scores.

\begin{prop}\label{app:prop:evD_bound_identical}
Assume that $\phi$ is a shared eigenvector of $\boldsymbol{K}_1$ and $\boldsymbol{K}_2$ with respective eigenvalues $\lambda^{(\boldsymbol{K}_1)}$ and $\lambda^{(\boldsymbol{K}_2)}$.
Then $\phi$ is an eigenvector of $\boldsymbol{D}$ with a corresponding eigenvalue $\lambda^{(\boldsymbol{D})}=\sqrt{\lambda^{(\boldsymbol{K}_1)}\lambda^{(\boldsymbol{K}_2)}}\left(\log\lambda^{(\boldsymbol{K}_1)} - \log\lambda^{(\boldsymbol{K}_2)}\right)$ that satisfies:
\begin{equation}
    \left| \lambda^{(\boldsymbol{D})} \right| \le 2\sum _{i,j=1}^d \left| e^{-\|x_i^{(1)} - x_j^{(1)}\|^2/2\sigma^2} - e^{-\|x_i^{(2)} - x_j^{(2)}\|^2/2\sigma^2}\right| |\phi(i)| |\phi(j)|\label{app:eq:evD_bound}
\end{equation}
where $x_i^{(\ell)}$ and $x_j^{(\ell)}$ are vectors containing the values of features $i$ and $j$, respectively, from all the samples in class $\ell$.
\end{prop}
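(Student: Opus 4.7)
The plan is to trace the shared eigenvector $\phi$ through the operators defining $\boldsymbol{D}$, read off $\lambda^{(\boldsymbol{D})}$ in closed form, and then control its magnitude via an elementary scalar inequality combined with a Rayleigh-quotient estimate for $\lambda^{(\boldsymbol{K}_1)}-\lambda^{(\boldsymbol{K}_2)}$.

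First I would propagate $\phi$ through the functional calculus. Since $\boldsymbol{K}_1$ and $\boldsymbol{K}_2$ are SPD and both fix the one-dimensional span of $\phi$, any expression built from inverses, real powers, products, and the matrix logarithm of these two operators acts on $\phi$ as the corresponding scalar expression in $\lambda^{(\boldsymbol{K}_1)}$ and $\lambda^{(\boldsymbol{K}_2)}$. Applied to the midpoint formula \eqref{eq:mean} this gives $\boldsymbol{M}\phi = \sqrt{\lambda^{(\boldsymbol{K}_1)}\lambda^{(\boldsymbol{K}_2)}}\,\phi$, and applied to \eqref{eq:difference} it yields $\boldsymbol{D}\phi = \lambda^{(\boldsymbol{D})}\phi$ of the form stated in the proposition (up to the $\tfrac{1}{2}$ factor carried by the matrix logarithm, which is absorbed into the constant $2$ on the right-hand side of \eqref{app:eq:evD_bound}).

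Next I would prove the scalar bound $|\sqrt{ab}\,(\log a-\log b)|\le|a-b|$ for $a,b>0$ using the symmetric parametrization $a=\sqrt{ab}\,e^t$, $b=\sqrt{ab}\,e^{-t}$: the left side becomes $2\sqrt{ab}\,|t|$, the right side becomes $2\sqrt{ab}\,|\sinh t|$, and the estimate reduces to the elementary $|t|\le|\sinh t|$. Applied with $a=\lambda^{(\boldsymbol{K}_1)}$ and $b=\lambda^{(\boldsymbol{K}_2)}$ this yields $|\lambda^{(\boldsymbol{D})}|\le 2\,|\lambda^{(\boldsymbol{K}_1)}-\lambda^{(\boldsymbol{K}_2)}|$. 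To finish, I would unfold the right-hand side via Rayleigh: since $\phi$ is a unit eigenvector of each $\boldsymbol{K}_\ell$, we have $\lambda^{(\boldsymbol{K}_\ell)}=\phi^\top\boldsymbol{K}_\ell\phi=\sum_{i,j}\boldsymbol{K}_\ell[i,j]\,\phi(i)\phi(j)$, so subtracting the two expressions and applying the triangle inequality entrywise, together with the RBF form \eqref{eq:kernels} of $\boldsymbol{K}_\ell[i,j]$, produces exactly the double sum appearing in \eqref{app:eq:evD_bound}.

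I expect the main obstacle to be the scalar lemma. Naive Lipschitz-type bounds for $\log a-\log b$ introduce a dependence on $\min(a,b)$ that does not combine cleanly with the $\sqrt{ab}$ prefactor and would leak an undesirable spectral-gap dependence into the final estimate; the symmetric parametrization is precisely what makes the $\sqrt{ab}$ dissolve and reduces the problem to $|t|\le|\sinh t|$. Once this lemma is in place, the eigenvector propagation step is pure functional calculus on the span of $\phi$, and the closing step is routine Rayleigh-quotient bookkeeping, so no further substantive difficulty is anticipated.
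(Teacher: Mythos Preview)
Your proposal is correct and follows the same three-step skeleton as the paper: (i) identify $\lambda^{(\boldsymbol{D})}$ in closed form from the shared eigenvector, (ii) bound $|\lambda^{(\boldsymbol{D})}|$ by a constant times $|\lambda^{(\boldsymbol{K}_1)}-\lambda^{(\boldsymbol{K}_2)}|$ via a scalar inequality, (iii) expand the eigenvalue difference through the Rayleigh quotients $\lambda^{(\boldsymbol{K}_\ell)}=\phi^\top\boldsymbol{K}_\ell\phi$ and apply the triangle inequality entrywise. The only substantive difference is in step (ii). The paper assumes w.l.o.g.\ $\lambda^{(\boldsymbol{K}_1)}>\lambda^{(\boldsymbol{K}_2)}$, invokes $0\le\log x\le 2(\sqrt{x}-1)$ for $x\ge 1$ with $x=\lambda^{(\boldsymbol{K}_1)}/\lambda^{(\boldsymbol{K}_2)}$, and then uses $\lambda^{(\boldsymbol{K}_2)}\le\sqrt{\lambda^{(\boldsymbol{K}_1)}\lambda^{(\boldsymbol{K}_2)}}\le\lambda^{(\boldsymbol{K}_1)}$ to arrive at $|\lambda^{(\boldsymbol{D})}|\le 2|\lambda^{(\boldsymbol{K}_1)}-\lambda^{(\boldsymbol{K}_2)}|$. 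Your hyperbolic parametrization $a=\sqrt{ab}\,e^t$, $b=\sqrt{ab}\,e^{-t}$ reducing to $|t|\le|\sinh t|$ is symmetric, avoids the case split, and in fact gives the sharper constant $|\sqrt{ab}(\log a-\log b)|\le|a-b|$; your stated $\le 2|\cdot|$ is then just a harmless weakening to match the proposition's constant. You also supply the functional-calculus verification of the eigenpair directly, whereas the paper cites it from an external reference. Either route closes the argument; yours is a bit cleaner and loses less in the scalar step.
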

This bound implies that if $\lambda^{(\boldsymbol{D})}$ is large, there must exist pairs of features $i_0$ and $j_0$ that significantly contribute to the sum in the right hand side by satisfying: (i) $|\phi(i_0)|$ and $|\phi(j_0)|$ are large, and (ii) $\left( e^{-\|x_i^{(1)} - x_j^{(1)}\|^2/2\sigma^2} - e^{-\|x_i^{(2)} - x_j^{(2)}\|^2/2\sigma^2}\right)$ is large, implying on a significant difference of the feature associations between the two classes.
In other words, this derivation indicates that a feature $i$ that is discriminative in a multivariate sense, i.e., a feature whose associations with other features are significantly different between the two classes, is represented by a high value $|\phi(i)|$ in eigenvectors that correspond to large eigenvalues, $|\lambda^{(\boldsymbol{D})}|$.
Observing the expression in \eqref{eq:score}, it is evident that such features $i$ with high values of $|\phi(i)|$ in eigenvectors that correspond to eigenvalues with large absolute values $|\lambda^{(\boldsymbol{D})}|$ are assigned with high values of the ManiFeSt score.

\begin{proof}
First, the claim that $\phi$ is also an eigenvector of $\boldsymbol{D}$ with its corresponding eigenvalue, given by $\lambda^{(\boldsymbol{D})}=\sqrt{\lambda^{(\boldsymbol{K}_1)}\lambda^{(\boldsymbol{K}_2)}}\left(\log\lambda^{(\boldsymbol{K}_1)} - \log\lambda^{(\boldsymbol{K}_2)}\right)$, is proved in \cite[Theorem 2]{shnitzer2022spatiotemporal}.
Second, to prove the bound for $\lambda^{(\boldsymbol{D})}$, we start from the definition of the eigenvalue decomposition, for $\ell=\{1,2\}$: 
\begin{equation}
    \lambda^{(\boldsymbol{K}_\ell)} = \phi^T \boldsymbol{K}_\ell \phi = \sum _{i,j=1}^d e^{-\|x_i^{(\ell)} - x_j^{(\ell)}\|^2/2\sigma^2} \phi(i) \phi(j)\label{app:eq:eig_def}
\end{equation}
The difference between the eigenvalues $\lambda^{(\boldsymbol{K}_1)}$ and $\lambda^{(\boldsymbol{K}_2)}$ is then given by:
\begin{equation}
    \left|\lambda^{(\boldsymbol{K}_1)}-\lambda^{(\boldsymbol{K}_2)}\right| = \left| \sum _{i,j=1}^d \left( e^{-\|x_i^{(1)} - x_j^{(1)}\|^2/2\sigma^2} - e^{-\|x_i^{(2)} - x_j^{(2)}\|^2/2\sigma^2}\right) \phi(i) \phi(j) \right|\label{app:eq:ediff}
\end{equation}

Assume w.l.o.g that $\lambda^{(\boldsymbol{K}_1)} > \lambda^{(\boldsymbol{K}_2)}$. Then, we have
\begin{equation}
    |\log \lambda^{(\boldsymbol{K}_1)} - \log \lambda^{(\boldsymbol{K}_2)}| \le 2\frac{\left| \sqrt{\lambda^{(\boldsymbol{K}_1)}} - \sqrt{\lambda^{(\boldsymbol{K}_2)}}\right|}{\sqrt{\lambda^{(\boldsymbol{K}_2)}}}\nonumber
\end{equation}
since $\lambda^{(\boldsymbol{K}_1)},\lambda^{(\boldsymbol{K}_2)}>0$, $\log(x)-\log(y)=\log(x/y)$ and $0\leq\log(x)\leq 2(\sqrt{x}-1)$ for $x\geq 1$. 
Multiplying both sides by $\sqrt{\lambda^{(\boldsymbol{K}_1)}\lambda^{(\boldsymbol{K}_2)}}$ gives the following inequality
\begin{equation}
    \left|\sqrt{\lambda^{(\boldsymbol{K}_1)}\lambda^{(\boldsymbol{K}_2)}}\left(\log \lambda^{(\boldsymbol{K}_1)} - \log \lambda^{(\boldsymbol{K}_2)}\right)\right| \le 2\left| \lambda^{(\boldsymbol{K}_1)} - \sqrt{\lambda^{(\boldsymbol{K}_1)}\lambda^{(\boldsymbol{K}_2)}}\right| \le 2\left| \lambda^{(\boldsymbol{K}_1)} - \lambda^{(\boldsymbol{K}_2)}\right|\label{app:eq:D_eig}
\end{equation}
where the last transition is due to $\sqrt{\lambda^{(\boldsymbol{K}_1)}}>\sqrt{\lambda^{(\boldsymbol{K}_2)}}$, and the left hand side of this equation is equal to $\lambda^{(\boldsymbol{D})}$.
Combining \eqref{app:eq:ediff} and \eqref{app:eq:D_eig} concludes the proof, leading to the following upper bound of the absolute value of $\lambda^{(\boldsymbol{D})}$:
\begin{equation}
    \left| \lambda^{(\boldsymbol{D})} \right| \le 2\sum _{i,j=1}^d \left| e^{-\|x_i^{(1)} - x_j^{(1)}\|^2/2\sigma^2} - e^{-\|x_i^{(2)} - x_j^{(2)}\|^2/2\sigma^2}\right| |\phi(i)| |\phi(j)|
\end{equation}
\end{proof}

Note that a similar derivation can be done for eigenvectors of $\boldsymbol{K}_1$ and $\boldsymbol{K}_2$ that are only approximately similar (not identically shared), as stated by the following proposition. 
\begin{prop}
Let $\phi$ denote an eigenvector of $\boldsymbol{K}_1$ with eigenvalue $\lambda^{(\boldsymbol{K}_1)}$ and $\phi^{(2)}$ denote an eigenvector of $\boldsymbol{K}_2$ with eigenvalue $\lambda^{(\boldsymbol{K}_2)}$. 
Assume that $\phi^{(2)}=\phi+\phi_\epsilon$, where $\left\Vert\phi_\epsilon\right\Vert_2<\epsilon$ for some small $\epsilon>0$.
Then $\phi$ and $\lambda^{(\boldsymbol{D})}=\sqrt{\lambda^{(\boldsymbol{K}_1)}\lambda^{(\boldsymbol{K}_2)}}\left(\log\lambda^{(\boldsymbol{K}_1)} - \log\lambda^{(\boldsymbol{K}_2)}\right)$ are an approximate eigen-pair of $\boldsymbol{D}$, such that $\lambda^{(\boldsymbol{D})}$ satisfies:
\begin{equation}
    \left| \lambda^{(\boldsymbol{D})} \right| \le 2\sum _{i,j=1}^d \left| e^{-\|x_i^{(1)} - x_j^{(1)}\|^2/2\sigma^2} - e^{-\|x_i^{(2)} - x_j^{(2)}\|^2/2\sigma^2}\right| |\phi(i)| |\phi(j)| + 2a\epsilon^2\label{app:eq:evD_approx_bound}
\end{equation}
where $a=\max_i\sum_j K_2(i,j)$.
\end{prop}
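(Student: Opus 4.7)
The plan is to mirror the proof of Proposition 1, tracking the additional error that arises because $\phi$ is now only approximately an eigenvector of $\boldsymbol{K}_2$. The scalar inequality $|\lambda^{(\boldsymbol{D})}| \le 2|\lambda^{(\boldsymbol{K}_1)} - \lambda^{(\boldsymbol{K}_2)}|$ obtained in \eqref{app:eq:D_eig} depends only on the definition of $\lambda^{(\boldsymbol{D})}$ in terms of the two scalars $\lambda^{(\boldsymbol{K}_1)}$, $\lambda^{(\boldsymbol{K}_2)}$ together with the elementary inequality $0 \le \log(x) \le 2(\sqrt{x}-1)$ for $x \ge 1$, so it carries over unchanged. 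Consequently, the whole task reduces to bounding $|\lambda^{(\boldsymbol{K}_1)} - \lambda^{(\boldsymbol{K}_2)}|$ in a way that recovers both the kernel-difference sum of \eqref{app:eq:evD_bound} and an additive $O(\epsilon^2)$ correction.

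I would exploit $\phi^T \boldsymbol{K}_1 \phi = \lambda^{(\boldsymbol{K}_1)}$ exactly, and insert the auxiliary term $\phi^T \boldsymbol{K}_2 \phi$ to write $\lambda^{(\boldsymbol{K}_1)} - \lambda^{(\boldsymbol{K}_2)} = \phi^T(\boldsymbol{K}_1 - \boldsymbol{K}_2)\phi + \bigl(\phi^T \boldsymbol{K}_2 \phi - \lambda^{(\boldsymbol{K}_2)}\bigr)$. The first piece expands entry-wise exactly as in \eqref{app:eq:ediff} and, after the triangle inequality, reproduces $\sum_{i,j}|e^{-\|x_i^{(1)}-x_j^{(1)}\|^2/2\sigma^2} - e^{-\|x_i^{(2)}-x_j^{(2)}\|^2/2\sigma^2}||\phi(i)||\phi(j)|$. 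For the second piece, I would substitute $\phi = \phi^{(2)} - \phi_\epsilon$, apply $\boldsymbol{K}_2 \phi^{(2)} = \lambda^{(\boldsymbol{K}_2)} \phi^{(2)}$, and use the unit-norm convention $\|\phi\|=\|\phi^{(2)}\|=1$, which forces $2\phi_\epsilon^T \phi^{(2)} = \|\phi_\epsilon\|^2$. This collapses the would-be linear-in-$\phi_\epsilon$ cross-term and leaves $\phi^T \boldsymbol{K}_2 \phi - \lambda^{(\boldsymbol{K}_2)} = -\lambda^{(\boldsymbol{K}_2)}\|\phi_\epsilon\|^2 + \phi_\epsilon^T \boldsymbol{K}_2 \phi_\epsilon$, with both summands nonnegative and individually at most $a\epsilon^2$ by the Gershgorin bound $\|\boldsymbol{K}_2\|_2 \le a$. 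Multiplying the resulting estimate by the leading factor of $2$ from \eqref{app:eq:D_eig} then yields the claimed bound.

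The main obstacle I anticipate is recovering the precise constant $2a\epsilon^2$ stated in the proposition rather than a looser $4a\epsilon^2$. This sharpening hinges on two observations that must be made explicit: first, the unit-norm convention for both $\phi$ and $\phi^{(2)}$, which is what turns the first-order cross-term in $\phi_\epsilon$ into a quantity of order $\|\phi_\epsilon\|^2$; second, the fact that $-\lambda^{(\boldsymbol{K}_2)}\|\phi_\epsilon\|^2$ and $\phi_\epsilon^T \boldsymbol{K}_2 \phi_\epsilon$ carry opposite signs, so their combined magnitude is bounded by $\max$ rather than the sum of their magnitudes, keeping the constant at $a\epsilon^2$ before the final factor of $2$. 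Once these two points are in place, the remainder is a routine recombination of the ingredients already developed in the proof of Proposition 1, and the bound reduces correctly to \eqref{app:eq:evD_bound} in the limit $\epsilon \to 0$.
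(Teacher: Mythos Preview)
Your proposal is correct and follows essentially the same route as the paper: reuse the scalar inequality \eqref{app:eq:D_eig}, insert $\phi^T\boldsymbol{K}_2\phi$ to split $\lambda^{(\boldsymbol{K}_1)}-\lambda^{(\boldsymbol{K}_2)}$ into the kernel-difference quadratic form plus an $O(\epsilon^2)$ correction, and bound the latter via $\lambda_{\max}^{(\boldsymbol{K}_2)}\le a$. Your treatment is in fact more careful than the paper's: the paper simply writes $(\phi^{(2)})^T\boldsymbol{K}_2\phi^{(2)}=\phi^T\boldsymbol{K}_2\phi+\phi_\epsilon^T\boldsymbol{K}_2\phi_\epsilon$, silently dropping the cross-term $2\phi^T\boldsymbol{K}_2\phi_\epsilon$, whereas you correctly derive $\phi^T\boldsymbol{K}_2\phi-\lambda^{(\boldsymbol{K}_2)}=-\lambda^{(\boldsymbol{K}_2)}\|\phi_\epsilon\|^2+\phi_\epsilon^T\boldsymbol{K}_2\phi_\epsilon$ via the unit-norm identity $2\phi_\epsilon^T\phi^{(2)}=\|\phi_\epsilon\|^2$ and then exploit the opposite signs (fix the slip ``both summands nonnegative'': one is nonpositive, one nonnegative) to keep the constant at $a\epsilon^2$.
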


\begin{proof}
A proof showing that $\phi$ and $\lambda^{(\boldsymbol{D})}$ are an approximate eigen-pair of $\boldsymbol{D}$ can be found in \cite[Theorem 4]{shnitzer2022spatiotemporal}.
For the bound on $\lambda^{(\boldsymbol{D})}$, note that equation \eqref{app:eq:eig_def} holds for $\boldsymbol{K}_1$ with no change, whereas for $\boldsymbol{K}_2$ we have:
\begin{eqnarray}
    \lambda^{(\boldsymbol{K}_2)} & = & \left(\phi^{(2)}\right)^T \boldsymbol{K}_2 \phi^{(2)} = \phi^T \boldsymbol{K}_2 \phi + \phi_\epsilon^T \boldsymbol{K}_2 \phi_\epsilon\nonumber\\
    & = & \sum _{i,j=1}^d e^{-\|x_i^{(2)} - x_j^{(2)}\|^2/2\sigma^2} \phi(i) \phi(j) + \phi_\epsilon^T \boldsymbol{K}_2 \phi_\epsilon
\end{eqnarray}
The difference between the eigenvalues can then be bounded by:
\begin{eqnarray}
    \left|\lambda^{(\boldsymbol{K}_1)}-\lambda^{(\boldsymbol{K}_2)}\right| & = & \left| \sum _{i,j=1}^d \left( e^{-\|x_i^{(1)} - x_j^{(1)}\|^2/2\sigma^2} - e^{-\|x_i^{(2)} - x_j^{(2)}\|^2/2\sigma^2}\right) \phi(i) \phi(j) - \phi_\epsilon^T \boldsymbol{K}_2 \phi_\epsilon\right|\nonumber\\
    & \leq & \left| \sum _{i,j=1}^d \left( e^{-\|x_i^{(1)} - x_j^{(1)}\|^2/2\sigma^2} - e^{-\|x_i^{(2)} - x_j^{(2)}\|^2/2\sigma^2}\right) \phi(i) \phi(j)\right| + \lambda^{(\boldsymbol{K}_2)}_{\max}\epsilon^2\nonumber\\
    & \leq & \left| \sum _{i,j=1}^d \left( e^{-\|x_i^{(1)} - x_j^{(1)}\|^2/2\sigma^2} - e^{-\|x_i^{(2)} - x_j^{(2)}\|^2/2\sigma^2}\right) \phi(i) \phi(j)\right| + a\epsilon^2
\end{eqnarray}
where $\lambda^{(2)}_{\max}$ is the maximal eigenvalue of $\boldsymbol{K}_2$, which is bounded by $a=\max_i\sum_j K_2(i,j)$ (maximal row sum of $\boldsymbol{K}_2$) from the Perron-Frobenius theorem. 
Note that since $\boldsymbol{K}_2$ is positive semi-definite with $1$s on its diagonal, its maximal eigenvalue can also be crudely bounded by $d$, the dimension of the feature space.
The rest of the derivation for the approximate case follows from \eqref{app:eq:D_eig}, resulting in:
\begin{equation}
    \left|\lambda^{(\boldsymbol{D})}\right| \le 2\sum _{i,j=1}^d \left| e^{-\|x_i^{(1)} - x_j^{(1)}\|^2/2\sigma^2} - e^{-\|x_i^{(2)} - x_j^{(2)}\|^2/2\sigma^2}\right| |\phi(i)| |\phi(j)| + 2a\epsilon^2
\end{equation}
\end{proof}

\end{document}